\documentclass[a4paper,12pt,numbers=noenddot,parskip=no]{scrartcl}
\usepackage[top=2cm, bottom=4cm]{geometry}

\usepackage[T1]{fontenc}
\usepackage[utf8]{inputenc}
\usepackage[greek,english]{babel}
\usepackage{lmodern}
\usepackage[euler]{textgreek}
\usepackage{microtype}
\usepackage{textcomp}


\setkomafont{sectioning}{\normalfont \rmfamily \bfseries}%
\setkomafont{paragraph}{\normalfont \rmfamily \bfseries}%


\usepackage{hyperref}%
\hypersetup{%
  breaklinks=true,%
  linktocpage=true,
  colorlinks=false,%
  linkcolor=black,%
  citecolor=black,%
  filecolor=black,%
  urlcolor=black,%
  pdftitle={},%
  pdfsubject={On importance-weighted autoencoders},%
  pdfauthor={Axel Finke},%
  pdfkeywords={},%
  pdfproducer={}%
  }%
\usepackage{url}


\usepackage{graphicx}
\graphicspath{{./../../input/fig/}}

\usepackage[labelfont={footnotesize, bf}, textfont=footnotesize, labelformat=simple, labelsep=period, aboveskip=7pt, belowskip=0pt]{caption}
\usepackage[labelfont={footnotesize, bf}, textfont=footnotesize, labelformat=simple, labelsep=period, aboveskip=4pt, belowskip=3pt]{subcaption}


\usepackage{amsfonts}
\usepackage{amsmath,bm}
\usepackage{mathtools} 
\mathtoolsset{showonlyrefs=true}
\usepackage[ntheorem, framemethod=TikZ]{mdframed} 
\usepackage{framed}  
\usepackage[amsmath, hyperref, framed]{ntheorem}%
\usepackage{mathdots}
\usepackage{amssymb}
\usepackage{nicefrac}       

\usepackage[overload]{empheq}


%


\newcommand{\normConst}{\mathcal{Z}} 





\DeclareMathOperator{\dN}{N} 

\newcommand{\diff}{\mathrm{d}} 
\newcommand{\intDiff}{\,\mathrm{d}} 
\DeclareMathOperator{\E}{\mathbb{E}} 
\DeclareMathOperator{\var}{var} 
\DeclareMathOperator{\ind}{\mathbf{1}} 
%
\newcommand{\T}{\mathrm{T}} 
\DeclareMathOperator{\bo}{\mathcal{O}}%
\DeclareMathOperator{\diag}{diag}%
\DeclareMathOperator{\KL}{\mathrm{D}_{\mathsc{kl}}}%
\newcommand{\iMat}{\mathrm{I}} 

\newcommand{\unitFun}{\mathbf{1}}%
\newcommand{\eul}{\mathrm{e}}%
\newcommand{\iidSim}{\mathrel{\overset{\mathsc{iid}}{\sim}}}



%
%
%
%
%
%
%
%
%
%
%
%

%
%
%
%
%
%
%
%
%
%
%
%
%
%
%
%
%
%
%
%
%
%
%
%
%
%

%

%
%
%
%
%
%
%
%
%
%
\newcommand{\calL}{\mathcal{L}}%
%
%
%
%
%
%
%
%
%
%
%
%
%
%

%
%

%
%
%
%
%
%
%
%
%
%

%

%
%
%

\newcommand{\mathsc}[1]{{\normalfont\textsc{#1}}}%

\newcommand{\IWAE}{\mathsc{iwae}}%
\newcommand{\RWS}{\mathsc{rws}}%

\newcommand{\ML}{\mathsc{ml}}%

\newcommand{\reals}{\mathbb{R}}%
\newcommand{\naturals}{\mathbb{N}}%
\newcommand{\spaceE}{\mathsf{E}}%
\newcommand{\spaceZ}{\mathsf{Z}}%

\newcommand*{\mycmss}{\fontfamily{cmss}\selectfont}
\DeclareTextFontCommand{\textcmss}{\mycmss}


%
%
%


\DeclareFontFamily{U}{mathx}{\hyphenchar\font45}
\DeclareFontShape{U}{mathx}{m}{n}{<-> mathx10}{}
\DeclareSymbolFont{mathx}{U}{mathx}{m}{n}
\DeclareMathAccent{\widebar}{0}{mathx}{"73}


%
%
%
%
%
%
%
%
%
%

\DeclareMathOperator{\argmin}{arg\,min}%
\DeclareMathOperator{\argmax}{arg\,max}%


%
%

%
%

%
                   
%


\usepackage{graphicx,accents}

\makeatletter
\DeclareRobustCommand{\cev}[1]{%
  \mathpalette\do@cev{#1}%
}
\newcommand{\do@cev}[2]{%
  \fix@cev{#1}{+}%
  \reflectbox{$\m@th#1\vec{\reflectbox{$\fix@cev{#1}{-}\m@th#1#2\fix@cev{#1}{+}$}}$}%
  \fix@cev{#1}{-}%
}
\newcommand{\fix@cev}[2]{%
  \ifx#1\displaystyle
    \mkern#23mu
  \else
    \ifx#1\textstyle
      \mkern#23mu
    \else
      \ifx#1\scriptstyle
        \mkern#22mu
      \else
        \mkern#22mu
      \fi
    \fi
  \fi
}
\makeatother


\makeatletter
\def\theorem@checkbold{}
\makeatother


\newcommand{\qedwhite}{\hfill \ensuremath{\Box}}


\usepackage{scalerel,stackengine}
\makeatletter
\stackMath
\newcommand\reallywidehat[1]{%
\savestack{\tmpbox}{\stretchto{%
  \scaleto{%
    \scalerel*[\widthof{\ensuremath{#1}}]{\kern.1pt\mathchar"0362\kern.1pt}%
    {\rule{0ex}{\textheight}}
  }{\textheight}%
}{2.4ex}}%
\stackon[-6.9pt]{#1}{\tmpbox}%
}
\makeatother


\newcommand{\target}{\pi}

\renewcommand{\normConst}{\mathcal{Z}}

\newcommand{\uTarget}{\gamma}

%
%



\renewcommand{\RWS}{\textsc{rws}}
\newcommand{\RWSDREG}{\textsc{rws-dreg}}

\renewcommand{\IWAE}{\textsc{iwae}}
\newcommand{\IWAEDREG}{\textsc{iwae-dreg}}
\newcommand{\IWAESTL}{\textsc{iwae-stl}}

\newcommand{\AISLE}{\textsc{aisle}}
\newcommand{\AISLEKL}{\textsc{aisle-kl}}
\newcommand{\AISLEKLNOREP}{\textsc{aisle-kl-norep}}
\newcommand{\AISLECHISQ}{\textsc{aisle-$\chi^2$}}
\newcommand{\AISLECHISQNOREP}{\textsc{aisle-$\chi^2$-norep}}

\renewcommand{\KL}{\mathop{\mathrm{KL}}}
\newcommand{\chisq}{\mathop{\chi^2}}
\newcommand{\Div}{\mathop{\mathrm{D}}}
\newcommand{\divFun}{\text{\textflorin}}

\renewcommand{\unitFun}{1}

\newcommand{\particleReparametrised}{e}
\newcommand{\particles}{\mathbf{z}}

\newcommand{\particlesDistributed}{\mathbf{z} \sim q_\phi^{\otimes K}}

%


\usepackage[inline]{enumitem}

\makeatletter
\newcommand{\mylabel}[2]{#2\def\@currentlabel{#2}\label{#1}}
\makeatother


\qedsymbol{\ensuremath{\Box}}
\theoremstyle{plain}%

\setlength\theorempostskipamount{-2pt plus 1pt minus1.5pt}

\theoremheaderfont{\normalfont\bfseries\rmfamily} 
\theorembodyfont{\normalfont\slshape}%
\theoremseparator{.}

\newtheorem{proposition}{Proposition}%
\newtheorem{lemma}{Lemma}%
\newtheorem{remark}{Remark}%

\theorembodyfont{\normalfont}%
\newmdtheoremenv[
 ntheorem=true,
 skipbelow = .6\baselineskip plus 1ex minus 1ex,
 skipabove = .6\baselineskip plus 1ex minus 1ex,
 innerleftmargin = 0pt,
 innerrightmargin = 0pt,
 leftline = false,
 rightline = false,
 needspace = 5ex 
]{framedAlgorithm}[theorem]{Algorithm}

\theoremstyle{nonumberplain}%
\theorembodyfont{\upshape \rmfamily}%
\theoremheaderfont{\upshape \rmfamily\bfseries}%
\theoremsymbol{\ensuremath{_\Box}}
\newtheorem{proof}{Proof}%


\usepackage{ifthen}%
\usepackage{mfirstuc}
\usepackage{xkeyval}%
\usepackage{xfor}%
\usepackage{amsgen}%
\usepackage{etoolbox}%
\usepackage{datatool-base}%

\usepackage[%
  xindy,%
  style=long,%
  toc=true,%
  nonumberlist,%
  acronym,%
  acronymlists={abbreviations},%
  hyperfirst=true
  ]{glossaries}%
  
\newacronym{PMCMC}{PMCMC}{particle Markov chain Monte Carlo}%
\newacronym{EMCMC}{EMCMC}{ensemble Markov chain Monte Carlo}%
\newacronym{APF}{APF}{auxiliary particle filter}%
\newacronym{FAAPF}{FA-APF}{fully-adapted auxiliary particle filter}%
\newacronym{PF}{PF}{particle filter}%
\newacronym{EHMM}{EHMM}{embedded hidden Markov models}%
\newacronym{HMM}{HMM}{hidden Markov model}%
\newacronym{CPF}{CPF}{conditional particle filter}%
\newacronym{CPFBS}{CPF-BS}{conditional particle filter with backward sampling}%
\newacronym{CPFAS}{CPF-AS}{conditional particle filter with backward sampling}%
\newacronym{PG}{PG}{particle Gibbs}%
\newacronym{PGBS}{PG-BS}{particle Gibbs sampler with backward sampling}%
\newacronym{PGAS}{PG-AS}{particle Gibbs sampler with backward sampling}%
\newacronym{SQMC}{SQMC}{sequential quasi Monte Carlo}%
\newacronym{RQMC}{RQMC}{randomised quasi Monte Carlo}%
\newacronym[user1={ancestor-sampling}]{AS}{AS}{ancestor sampling}%
\newacronym[user1={backward-sampling}]{BS}{BS}{backward sampling}%
\newacronym{PDF}{PDF}{probability density function}%
\newacronym{IID}{IID}{independent and identically distributed}%
\newacronym{MCMC}{MCMC}{Markov chain Monte Carlo}%
\newacronym{MH}{MH}{Metropolis--Hastings}%
\newacronym{ESS}{ESS}{effective sample size}%
\newacronym{PMMH}{PMMH}{particle marginal Metro\-po\-lis--Has\-tings}%
\newacronym{MCWM}{MCWM}{Monte Carlo within Metropolis}%
\newacronym{CDF}{CDF}{cumulative distribution function}%
\newacronym{SMC}{SMC}{sequential Monte Carlo}%
\newacronym{CSMC}{CSMC}{conditional sequential Monte Carlo}%
\newacronym{EPSRC}{EPSRC}{Engineering and Physical Sciences Research Council}%
\newacronym{LW}{LW}{Liu~\&~West}%
\newacronym{PL}{PL}{particle learning}%
\newacronym{FF}{FF}{fertility factor}%
\newacronym{CLT}{CLT}{central limit theorem}%
\newacronym{BPF}{BPF}{bootstrap particle filter}%
\newacronym{QMC}{QMC}{quasi Monte Carlo}%
\newacronym{IACT}{IACT}{integrated autocorrelation time}%
\newacronym{CI}{CI}{confidence interval}%
\newacronym{ATI}{ATI}{Alan Turing Institute}%
\newacronym{SA}{SA}{simulated annealing}%
\newacronym[user1={importance-sampling}]{IS}{IS}{importance sampling}%
\newacronym{MSE}{MSE}{mean-square error}%
\newacronym{MAP}{MAP}{maximum a-posteriori}%
\newacronym{ABC}{ABC}{approximate Bayesian computation}%
\newacronym{NCP}{NCP}{non-centred parametrisation}%
\newacronym{CP}{CP}{centred parametrisation}%
\newacronym[\glslongpluralkey={Markov jump processes}]{MJP}{MJP}{Markov jump process}%
\newacronym{ELBO}{ELBO}{evidence lower bound}%
\newacronym{KL}{KL}{Kullback--Leibler}%
\newacronym{IWAE}{IWAE}{importance weighted autoencoder}%
\newacronym{VAE}{VAE}{variational autoencoder}%
\newacronym{VIS}{VIS}{variational importance sampling}
\newacronym{VSMC}{VSMC}{variational sequential Monte Carlo}
\newacronym{VSMCALT}{VSMC-ALT}{alternative variational sequential Monte Carlo}
\newacronym{RWS}{RWS}{reweighted wake-sleep}
\newacronym{RMSE}{RMSE}{root mean-square error}%
\newacronym{SAME}{SAME}{state augmentation for marginal estimation}%
\newacronym{MLE}{MLE}{maximum-likelihood estimate}%
\newacronym{ML}{ML}{maximum-likelihood}%

\newacronym[\glslongpluralkey={MCMC particle filters}]{MCMCPF}{MCMC-PF}{MCMC particle filter}%
\newacronym[\glslongpluralkey={MCMC bootstrap particle filters}]{MCMCBPF}{MCMC-BPF}{MCMC bootstrap particle filter}%
\newacronym[\glslongpluralkey={MCMC fully-adapted particle filters}]{MCMCFAAPF}{MCMC-FA-APF}{MCMC fully-adapted auxiliary particle filter}%
\newacronym{MCMCAPF}{MCMC-APF}{MCMC auxiliary particle filters}%
\newacronym{FAAPFalt}{FA-APF}{fully-adapted auxiliary PF}%
\newacronym{BPFalt}{BPF}{bootstrap PF}%
\newacronym{WLLN}{WLLN}{weak law of large numbers}%
\newacronym{SLLN}{SLLN}{strong law of large numbers}%
\newacronym{HMC}{HMC}{Hamiltonian Monte Carlo}%
\newacronym{MALA}{MALA}{Metropolis-adjusted Langevin algorithm}%
\newacronym{AIS}{AIS}{adaptive importance sampling}%

\newacronym[user1={sticking-the-landing (STL)}]{STL}{STL}{sticking the landing}%
\newacronym{ADAM}{ADAM}{adaptive moment estimation}%

\newacronym{TMC}{TMC}{tensor Monte Carlo}

\newacronym{SMCSF}{SMC*\!}{`subsample-free' SMC}%
\newacronym{VSMCSF}{VSMC*\!}{`subsample-free' VSMC}%
\newacronym{IWAESTL}{IWAE-STL}{`sticking-the-landing' IWAE}%
\newacronym{IWAEDREG}{IWAE-DREG}{`doubly-reparametrised' IWAE}%
\newacronym{RWSDREG}{RWS-DREG}{`doubly-reparametrised' RWS}%
\newacronym{AISLE}{AISLE}{adaptive importance sampling for learning}
\newacronym{AISLEKLNOREP}{AISLE-KL-NOREP}{???}%
\newacronym{AISLEKL}{AISLE-KL}{???}%
\newacronym{AISLECHISQNOREP}{AISLE-$\chi^2$-NOREP}{???}%
\newacronym{AISLECHISQ}{AISLE-$\chi^2$}{???}%
\newacronym{VSMCALTSTL}{VSMC-ALT-STL}{???}%
\newacronym{VSMCALTDREG}{VSMC-ALT-DREG}{???}%
\newacronym{VSMCSTLSF}{VSMC*-STL}{???}%
\newacronym{VSMCDREGSF}{VSMC*-DREG}{???}%
\newacronym[\glslongpluralkey={adaptive particle filters for efficient learning}]{APFLE}{APFLE}{adaptive particle filter for learning}
\newacronym{APFLEKLNOREP}{APFLE-KL-NOREP}{???}%
\newacronym{APFLEKL}{APFLE-KL}{???}%
\newacronym{APFLECHISQNOREP}{APFLE-$\chi^2$-NOREP}{???}%
\newacronym{APFLECHISQ}{APFLE-$\chi^2$}{???}%
\newacronym{APFLESF}{APFLE*\!}{`subsample-free' APFLE}%
\newacronym{APFLEKLNOREPSF}{APFLE*-KL-NOREP}{???}%
\newacronym{APFLEKLSF}{APFLE*-KL}{???}%
\newacronym{APFLECHISQNOREPSF}{APFLE*-$\chi^2$-NOREP}{???}%
\newacronym{APFLECHISQSF}{APFLE*-$\chi^2$}{???}%
%



\usepackage{csquotes}%
\usepackage{natbib}
\setcitestyle{authoryear}
\bibliographystyle{hapalike}

\title{On importance-weighted autoencoders}
\author{Axel Finke \and Alexandre H.~Thi\'ery}
\date{Department of Probability and Applied Statistics,\\National University of Singapore\\[0.4cm]\today}

\begin{document}

\maketitle
\glsunset{IID}
\glsunset{ADAM}

\begin{abstract}
\noindent{}The \emph{\gls{IWAE}} \citep{burda2016importance} is a popular variational-inference method which achieves a tighter evidence bound (and hence a lower bias) than standard \glsdescplural{VAE} by  optimising a \emph{multi-sample objective,} i.e.\ an objective that is expressible as an integral over $K>1$ Monte Carlo samples. Unfortunately, \gls{IWAE} crucially relies on the availability of reparametrisations and even if these exist, the multi-sample objective leads to inference-network gradients which break down as $K$ is increased \citep{rainforth2018tighter}. This breakdown can only be circumvented by removing high-variance score-function terms, either by heuristically ignoring them (which yields the \emph{\gls{IWAESTL}} gradient from \citet{roeder2017sticking}) or through an identity from \citet{tucker2019reparametrised} (which yields the \emph{\gls{IWAEDREG}} gradient). In this work, we argue that directly optimising the proposal distribution in importance sampling as in the \emph{\gls{RWS}} algorithm from \citet{bornschein2015reweighted} is preferable to optimising \gls{IWAE}-type multi-sample objectives. To formalise this argument, we introduce an adaptive-importance sampling framework termed \emph{\gls{AISLE}} which slightly generalises the \gls{RWS} algorithm. We then show that \gls{AISLE} admits \gls{IWAESTL} and \gls{IWAEDREG} (i.e.\ the \gls{IWAE}-gradients which avoid breakdown) as special cases.
%
\end{abstract}

\glsunset{RWSDREG}
\glsunset{AISLEKLNOREP}
\glsunset{AISLEKL}
\glsunset{AISLECHISQNOREP}
\glsunset{AISLECHISQ}

\glsreset{IID}
\glsreset{AISLE}
\glsreset{IWAE}
\glsreset{RWS}

\section{Introduction}

\subsection{Problem statement}
Let $x$ be some observation and let $z$ be some latent variable taking values in some space $\spaceZ$. These are modeled via the \emph{generative model} $p_\theta(z,x) = p_\theta(z) p_\theta(x|z)$ which gives rise to the marginal likelihood $p_\theta(x) = \int_\spaceZ p_\theta(z,x) \intDiff z$ of the model parameters $\theta$. The latter may also be viewed as the evidence for the model parametrised by a particular value of $\theta$. In this work, we analyse algorithms for \emph{variational inference,} i.e. algorithms which aim to
\begin{enumerate}
 \item\label{enum:goal:1} learn the generative model, i.e.\ find a value $\theta^\star$ which is approximately equal to the \emph{\gls{MLE}} $\smash{\theta^\ML \coloneqq \argmax_\theta p_\theta(x)}$;

 \item\label{enum:goal:2} construct a tractable \emph{variational approximation} $q_{\phi,x}(z)$ of 
 $p_\theta(z|x) = p_\theta(z,x)/p_\theta(x)$, 
 i.e.\ find the value $\phi^\star$ such that $q_{\phi^{\mathrlap{\star}}\,,x}(z)$ is as close as possible to $p_\theta(z|x)$ in some suitable sense.
\end{enumerate}
A few comments about this setting are in order. Firstly, as is common in the literature, we restrict our presentation to a single latent representation--observation pair $(z,x)$ to avoid notational clutter -- the extension to multiple independent observations is straightforward.  Secondly, we assume that no parameters are shared between the generative model $p_\theta(z,x)$ and the variational approximation $q_{\phi,x}(z)$. This is common in neural-network applications but could be relaxed. Thirdly, our setting is general enough to cover amortised inference which is why we often refer to $\phi$ as the parameters of an \emph{inference network.} 

In recent years, two classes of stochastic-gradient ascent algorithms for optimising $(\theta, \phi)$ -- which employ $K \geq 1$ Monte Carlo samples (`particles') to reduce errors -- have been proposed. 
\begin{itemize} \glsunset{IWAE}
 \item \textbf{\gls{IWAE}.} \glsreset{IWAE} The \emph{\gls{IWAE}} \citep{burda2016importance} optimises a joint objective for $\theta$ and $\phi$ (which is `biased' for $\theta$ though optimising $\phi$ or increasing $K$ decreases this bias) whose gradients are unbiasedly approximated via the Monte Carlo method. Unfortunately, as this \emph{multi-sample} objective is expressible as an integral on a $K$-dimensional space, the signal-to-noise ratio of the \gls{IWAE} $\phi$-gradient vanishes as $K$ grows \citep{rainforth2018tighter}. 
 Two modified \gls{IWAE} $\phi$-gradients avoid this breakdown by removing high-variance `score-function' terms:
 \begin{itemize} \glsunset{IWAESTL}
    \item \textbf{\gls{IWAESTL}.} \glsreset{IWAESTL} The \emph{\gls{IWAESTL}} $\phi$-gradient \citep{roeder2017sticking} heuristically drops the problematic score-function terms from the \gls{IWAE} $\phi$-gradient. This induces bias for the \gls{IWAE} objective. 
    
    \glsunset{IWAEDREG}
    \item \textbf{\gls{IWAEDREG}.} \glsreset{IWAEDREG} The \emph{\gls{IWAEDREG}} $\phi$-gradient  \citep{tucker2019reparametrised} unbiasedly removes the problematic score-function terms from the \gls{IWAE} $\phi$-gradient using a formal identity.
 \end{itemize}
 
 \glsunset{RWS}
 \item \textbf{\gls{RWS}.} \glsreset{RWS} The \emph{\gls{RWS}} algorithm \citep{bornschein2015reweighted} 
 optimises two separate but `unbiased' objectives for $\theta$ and $\phi$. Its gradients are approximated by self-normalised importance sampling with $K$ particles which induces bias (though again, optimising $\phi$ or increasing $K$ decreases this bias). \gls{RWS} can be viewed as an adaptive importance-sampling approach which iteratively improves its proposal distribution while simultaneously optimising $\theta$ via stochastic approximation. Crucially, \gls{RWS} is \emph{not} a multi-sample objective approach and hence does not require continuous reparametrisations nor do its $\phi$-gradients suffer from the breakdown highlighted in \citet{rainforth2018tighter}.
\end{itemize}

Of these two methods, the \gls{IWAE} is the most popular and \citet{tucker2019reparametrised} demonstrated empirically that \gls{RWS} can break down, conjecturing that this is due to the fact that \gls{RWS} does not optimise a joint objective (for $\theta$ and $\phi$). Meanwhile, the \gls{IWAESTL} gradient performed consistently well despite lacking a firm theoretical footing. Yet, \gls{IWAE} suffers from the above-mentioned $\phi$-gradient breakdown and exhibited inferior empirical performance to \gls{RWS} in some scenarios \citep{le2019revisiting}. Thus, it is not clear whether the multi-sample objective approach of \gls{IWAE} or the adaptive importance-sampling approach of \gls{RWS} is preferable.

In this work, we argue that the adaptive importance-sampling paradigm of \gls{RWS} is preferable to the multi-sample objective paradigm of \glspl{IWAE}. This is because 
\begin{enumerate*}[label=(\alph*)]
 \item the multi-sample objective crucially requires reparametrisations and, even if these are available, leads to the $\phi$-gradient breakdown,
 \item modifications of the \gls{IWAE} $\phi$-gradient which avoid this breakdown (i.e.\ \gls{IWAESTL} and \gls{IWAEDREG}) can be justified in a more principled manner by taking an \gls{RWS}-type adaptive importance-sampling view.
\end{enumerate*}

To formalise these arguments, we slightly generalise the \gls{RWS} algorithm to obtain a generic adaptive importance-sampling framework for variational inference which we term \emph{\gls{AISLE}} for ease of reference. We then show that \gls{AISLE} admits not only \gls{RWS} but also the \gls{IWAEDREG} and \gls{IWAESTL} gradients as special cases.

\subsection{Contributions}

Importance sampling as well as the \gls{IWAE} and \gls{RWS} algorithms are reviewed in Section~\ref{sec:background}. Novel material is presented in Section~\ref{sec:aisle}, where we we introduce the \gls{AISLE}-framework:
\begin{itemize}
  \item 
  
  In Subsection~\ref{subsec:aisle:special_case_I}, we show that \gls{AISLE} admits \gls{RWS} as a special case. In addition, we prove that the \gls{IWAESTL} gradient is in turn recovered as a special case of \gls{RWS} (and hence of \gls{AISLE}) via a principled and novel application of the `double-reparametrisation' identity from \citet{tucker2019reparametrised}. This indicates that the breakdown of \gls{RWS} observed in \citet{tucker2019reparametrised} may not be due to its lack of a joint objective as previously conjectured (because \gls{IWAESTL} avoided this breakdown). Our work also provides a theoretical foundation for \gls{IWAESTL} which was hitherto only heuristically justified as a biased \gls{IWAE} gradient.
 
 \item 
 
 In Subsection~\ref{subsec:aisle:special_case_II}, we prove that \gls{AISLE} also admits the \gls{IWAEDREG} gradient as a special case. Our derivation also makes it clear that the learning rate should be scaled as $\bo(K)$ for the \gls{IWAE} $\phi$-gradient (and its modified version \gls{IWAEDREG}) unless the gradients are normalised as implicitly done by popular optimisers such as \gls{ADAM} \citep{kingma2015adam}. In contrast, the scaling of the learning rate for \gls{AISLE} is independent of $K$.
 
 \item In the supplementary materials, we provide some insight into the impact of the self-normalisation bias on some of the importance-sampling based gradient approximations (Appendix~\ref{app:sec:bias}) and empirically compare all algorithms discussed in this work (Appendix~\ref{app:sec:illustrations}). 
\end{itemize}

We stress that the point of our work is not to derive new algorithms nor to establish which of the various special cases of \gls{AISLE} is preferable. Indeed, while we compare all algorithms discussed in this work empirically on Gaussian models in the in the supplementary materials available with this paper, we refer the reader to \citet{tucker2019reparametrised, le2019revisiting} for a extensive empirical comparisons of all the algorithms discussed in this work. Instead, the main message of our work is that the \gls{AISLE}-type adaptive importance-sampling paradigm is preferable to the  \gls{IWAE}-type multi-sample objective paradigm because the former allows us to derive all the above-mentioned variants of \gls{IWAE} -- as well as further algorithms which do not require reparametrisations -- in a principled manner (the only exception is the standard \gls{IWAE} reparametrisation $\phi$-gradient but this variant suffers from the breakdown highlighted in \citet{rainforth2018tighter} and was therefore consistently outperformed by the other variants in the simulations shown in Appendix~\ref{app:sec:illustrations} and in \citet{tucker2019reparametrised}, for $K>1$).
%
%

\subsection{Notation}

We assume that all (probability) measures $p$ used in this work are absolutely continuous w.r.t.\ some suitable dominating measure $\diff z$ and with some abuse of notation, we use the same symbol for the measure and the density, i.e.\ we write $p(\diff z) = p(z) \diff z$. With this convention, we employ the shorthand $\smash{p(f) \coloneqq \int_\spaceZ f(z) p(z) \intDiff z}$ for the integral of some $p$-integrable test function $f$; thus, $\smash{p(f) = \E_{z \sim p}[f(z)]}$ if $p$ is a probability measure. Furthermore, $\smash{q^{\otimes K}(z^{1:K}) \coloneqq \prod_{k=1}^K q(z^k)}$. We also let $0$ denote vectors or matrices of $0$s of some appropriate size which will be clear from the context and we let $\unitFun$ be the function that takes value $1$ everywhere on its domain. To keep the notation concise, we hereafter suppress dependence on the observation $x$, i.e.\ we write $\smash{q_\phi(z) \coloneqq q_{\phi,x}(z)}$ as well as
\begin{equation}
 \target_{\theta}(z) \coloneqq p_\theta(z|x) = \frac{p_\theta(z,x)}{p_\theta(x)} = \frac{\uTarget_\theta(z)}{\normConst_\theta},
\end{equation}
where $\uTarget_{\theta}(z) \coloneqq p_\theta(z,x)$ and where $\normConst_\theta \coloneqq p_\theta(x) = \int_\spaceZ \uTarget_\theta(z) \intDiff z = \uTarget_\theta(\unitFun)$.

\section{Background}
\label{sec:background}

\subsection{Importance sampling}

\paragraph{Basic idea.}

We hereafter write $\psi \coloneqq (\theta, \phi)$ and assume that the support of $q_{\phi}$ includes the support of $\target_{\theta}$ so that the importance weight function $w_{\psi}(z) \coloneqq {\uTarget_{\theta}(z)} / {q_{\phi}(z)}$ is well defined. 
For $\target_\theta$-integrable $f\colon \spaceZ \to \reals$, we can unbiasedly approximate integrals of the form
\begin{equation}
 \uTarget_{\theta}(f) \coloneqq \int_\spaceZ f(z) \uTarget_{\theta}(z) \intDiff z = \int_\spaceZ f(z) w_\psi(z) q_\phi(z) \intDiff z = q_\phi(f w_\psi), \label{eq:unnormalised_target_integral}
\end{equation}
via \emph{importance sampling} using a set of $K$ particles, $\smash{\particles \coloneqq (z^{\mathrlap{1}}, \dotsc, z^K) \sim q_\phi^{\otimes K}}$, which are \gls{IID} according to $q_\phi$, as
\begin{equation}
  \hat{\uTarget}_{\theta}\langle \phi, \particles\rangle(f) \coloneqq \frac{1}{K}\sum_{k=1}^{\smash{K}} w_{\psi}(z^k) f(z^k).
\end{equation}
Here, the notation $\langle \phi, \particles\rangle$ stresses the dependence of the estimator on $\phi$ and $\particles$. Note that this is simply an application of the vanilla Monte Carlo method to the expectation from the r.h.s.\ of \eqref{eq:unnormalised_target_integral}. 
Hereafter, we use the convention that $\smash{\E = \E_{\particlesDistributed}}$ and $\smash{\var_{\particlesDistributed}}$ denote expectation and variance w.r.t.\ $\smash{\particles = (z^{\mathrlap{1}}, \dotsc, z^K) \sim q_\phi^{\otimes K}}$. 
%

\paragraph{Self-normalised importance sampling.} 
Approximating integrals of the form
\begin{equation}
 \target_{\theta}(f) \coloneqq \int_\spaceZ f(z) \target_{\theta}(z) \intDiff z = \frac{\uTarget_{\theta}(f)}{\uTarget_{\theta}(\unitFun)},
\end{equation}
is slightly more complicated because the marginal likelihood $\normConst_\theta = \uTarget_{\theta}(\unitFun) = p_\theta(x)$ is intractable. Plugging in importance-sampling approximations for both the numerator and denominator leads to the following \emph{self-normalised} importance sampling estimate:
\begin{equation}
  \hat{\target}_{\theta}\langle \phi, \particles\rangle(f) \coloneqq \frac{\hat{\uTarget}_{\theta}\langle \phi, \particles\rangle(f)}{\hat{\uTarget}_{\theta}\langle \phi, \particles\rangle(\unitFun)} =  \sum_{k=1}^K \frac{w_{\psi}(z^k)}{\sum_{l=1}^K w_{\psi}(z^l)} f(z^k).
\end{equation}

\paragraph{Properties.}
Proposition~\ref{prop:properties_of_importance_sampling} summarises some well-known properties of importance-sampling approximations \citep[see, e.g.,][]{geweke1989bayesian} used throughout this work.

\begin{proposition}\label{prop:properties_of_importance_sampling}
 Let $f \colon \spaceZ \to \reals$ be $\target_\theta$-integrable and $\smash{\particlesDistributed}$. Then if $\sup w_\psi < \infty$,\\[-4ex]
\begin{enumerate}
 \item \label{prop:properties_of_importance_sampling:1} $\smash{\E[\hat{\uTarget}_{\theta}\langle\phi, \particles\rangle(f)] = \uTarget_\theta(f)}$, for any $K \in \naturals$,
 \item \label{prop:properties_of_importance_sampling:2} $\smash{\E[\hat{\target}_\theta\langle\phi, \particles\rangle(f)] = \target_\theta(f) + \bo(K^{-1})}$ and 
 $\smash{\var[\hat{\target}_\theta\langle\phi, \particles\rangle(f)] = \bo(K^{-1})}$,
 \item \label{prop:properties_of_importance_sampling:3} $\smash{\hat{\uTarget}_\theta\langle\phi, \particles\rangle(f) \to \uTarget_\theta(f)}$ and $\smash{\hat{\target}_\theta\langle\phi, \particles\rangle(f) \to \target_\theta(f)}$, almost surely, as $\smash{K \to \infty}$.
\end{enumerate}
\end{proposition}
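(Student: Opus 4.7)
The plan is to handle the three claims in order. For part 1, I would simply unfold definitions: writing the estimator as the sample mean of $K$ IID copies of $w_\psi(z^1) f(z^1)$ and using $\sup w_\psi < \infty$ together with $\target_\theta$-integrability of $f$ to secure integrability, linearity of expectation reduces the claim to the single-particle identity
\[
\E_{z \sim q_\phi}[w_\psi(z) f(z)] = \int_\spaceZ \frac{\uTarget_\theta(z)}{q_\phi(z)} f(z) q_\phi(z) \intDiff z = \uTarget_\theta(f).
\]

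Let $A_K \coloneqq \hat{\uTarget}_\theta\langle\phi,\particles\rangle(f)$ and $B_K \coloneqq \hat{\uTarget}_\theta\langle\phi,\particles\rangle(\unitFun)$, so that $\hat{\target}_\theta\langle\phi,\particles\rangle(f) = A_K/B_K$. For part 3, I apply the strong law of large numbers to the IID sequences $\smash{(w_\psi(z^k) f(z^k))_k}$ and $\smash{(w_\psi(z^k))_k}$ to obtain $A_K \to \uTarget_\theta(f)$ and $B_K \to \normConst_\theta$ almost surely; since $\normConst_\theta > 0$, the continuous mapping theorem then delivers $A_K/B_K \to \target_\theta(f)$ a.s. For part 2 I would exploit the central identity
\[
\hat{\target}_\theta\langle\phi,\particles\rangle(f) - \target_\theta(f) = \frac{A_K - \target_\theta(f) B_K}{B_K} = \frac{N_K}{B_K},
\]
where $N_K \coloneqq A_K - \target_\theta(f) B_K$ is a sample mean of $K$ IID centred random variables with bounded variance under $\sup w_\psi < \infty$ (tacitly assuming enough integrability of $f$, e.g.\ $\target_\theta(f^2) < \infty$). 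Thus $\E[N_K] = 0$ and $\var(N_K) = \bo(K^{-1})$. For the variance claim I would combine this with concentration of $B_K$ around $\normConst_\theta$; for the bias claim I would Taylor-expand
\[
\frac{1}{B_K} = \frac{1}{\normConst_\theta} - \frac{B_K - \normConst_\theta}{\normConst_\theta^2} + \bo\bigl((B_K - \normConst_\theta)^2\bigr),
\]
multiply by $N_K$ and take expectations. The $\E[N_K]/\normConst_\theta$ term vanishes and the leading correction is $-\cov(N_K, B_K)/\normConst_\theta^2 = \bo(K^{-1})$, since both $N_K$ and $B_K$ are $K^{-1}$ times sums of IID variables.

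The main obstacle is making the Taylor expansion of $1/B_K$ rigorous, because in principle $B_K$ could be close to zero. The standard fix is to split on the event $E_K \coloneqq \{|B_K - \normConst_\theta| \leq \normConst_\theta/2\}$, on which $1/B_K \leq 2/\normConst_\theta$ is bounded. Boundedness of $w_\psi$ turns $B_K$ into an average of bounded IID variables, so $\Prob(E_K^c)$ decays exponentially in $K$ by Hoeffding's inequality; its contribution to bias and variance is therefore negligible at any polynomial rate, while on $E_K$ the Taylor remainder $\bo((B_K - \normConst_\theta)^2)$ has expectation $\bo(K^{-1})$. This is the textbook delta-method argument for ratios of IID sums applied to self-normalised importance sampling.
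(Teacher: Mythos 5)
Your proof is correct and matches the paper's (much terser) argument: the paper declares Part~\ref{prop:properties_of_importance_sampling:1} immediate, derives Part~\ref{prop:properties_of_importance_sampling:3} from the strong law of large numbers, and outsources Part~\ref{prop:properties_of_importance_sampling:2} to \citet[p.~35]{liu2001monte}, whose argument is exactly the delta-method expansion of the ratio $A_K/B_K$ that you carry out, including the truncation needed to control $1/B_K$. You are also right to flag that the variance bound tacitly requires a second-moment condition such as $\target_\theta(f^2) < \infty$ beyond the stated $\target_\theta$-integrability; the paper glosses over this as well.
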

\begin{proof}
  Part~\ref{prop:properties_of_importance_sampling:1} is immediate; Part~\ref{prop:properties_of_importance_sampling:2} is proved, e.g.\ in \citet[p.~35]{liu2001monte}; Part~\ref{prop:properties_of_importance_sampling:3} is a direct consequence of the strong law of large numbers. $\qedwhite$
\end{proof}

Part~\ref{prop:properties_of_importance_sampling:1} of Proposition~\ref{prop:properties_of_importance_sampling} shows that (non self-normalised) importance-sampling approximations $\hat{\uTarget}_\theta\langle\phi, \particles\rangle(f)$ are unbiased. In particular,
\begin{equation}
 \widehat{\normConst}_\theta\langle\phi, \particles\rangle \coloneqq \hat{\uTarget}_\theta\langle\phi, \particles\rangle(\unitFun) = \frac{1}{K} \sum_{k=1}^K w_\psi(z^k),
\end{equation}
is an unbiased estimate of the normalising constant $\normConst_\theta = \uTarget_\theta(\unitFun) = p_\theta(x)$. In contrast, the self-normalised importance-sampling approximation $\hat{\target}_\theta\langle\phi, \particles\rangle(f)$ is typically biased. However, Part~\ref{prop:properties_of_importance_sampling:3} shows that it is still consistent and Part~\ref{prop:properties_of_importance_sampling:2} ensures that the bias decays quickly in $K$.

\glsreset{IWAE}
\subsection{\Glsentryfull{IWAE}} \label{subsec:iwae}
\glsreset{IWAE}
\glsreset{VAE}

\paragraph{Objective.}

The \emph{\gls{IWAE}}, introduced by \citet{burda2016importance}, seeks to find a value $\theta^\star$ of the generative-model parameters $\theta$ which maximises a lower bound $\smash{\calL_\psi^K}$ on the log-marginal likelihood (`evidence') which depends on the inference-network parameters $\phi$ and the number of samples, $K \geq 1$,
\begin{gather}
 \smash{\psi^\star \coloneqq (\theta^{\mathrlap{\star}}\,, \phi^\star) \coloneqq \argmax_{\psi} \calL_{\psi}^K,}\\
 \calL_{\psi}^K \coloneqq \E\bigl[\log \widehat{\normConst}_\theta\langle \phi, \particles\rangle \bigr]. \label{eq:iwae:objective}
\end{gather}
For any finite $K$, optimisation of the inference-network parameters $\phi$ tightens the evidence bound. \citet{burda2016importance} prove the following properties. Firstly, $\smash{\calL_\psi^K \leq \log \normConst_\theta}$ follows from Jensen's inequality and Part~\ref{prop:properties_of_importance_sampling:1} of Proposition~\ref{prop:properties_of_importance_sampling}. Secondly, again by Jensen's inequality, $\smash{\calL_\psi^K \leq \calL_\psi^{K+1}}$. These inequalities are strict unless $\target_{\theta} = q_{\phi}$. Finally, Part~\ref{prop:properties_of_importance_sampling:3} of Proposition~\ref{prop:properties_of_importance_sampling} (along with the dominated convergence theorem) shows that for any $\phi$, $\smash{\calL_\psi^K \uparrow \log \normConst_\theta}$ as $K \to \infty$. If $K = 1$, the \gls{IWAE} reduces to the \gls{VAE} from \citet{kingma2014auto}. However, for $K > 1$, as pointed out in \citet{cremer2017reinterpreting, domke2018importance}, the \gls{IWAE} also constitutes another \gls{VAE} on an extended space based on an auxiliary-variable construction developed in \citet{andrieu2009pseudo, andrieu2010particle, lee2011auxiliary} \citep[see, e.g.\@][for a review]{finke2015extended}.

\paragraph{Standard reparametrisation gradient.}
The gradient of the \gls{IWAE} objective from \eqref{eq:iwae:objective}
$\smash{
 \nabla_\psi \calL_{\psi}^K
= \E\bigl[  \nabla_\psi \log \widehat{\normConst}_\theta\langle \phi, \particles\rangle + G_\psi(\particles)\bigr]}$,
with $\smash{G_\psi(\particles) \coloneqq \log \widehat{\normConst}_\theta\langle \phi, \particles\rangle  \sum_{k=1}^{\smash{K}} \nabla_\psi \log q_\phi(z^k)}$, is typically intractable. However, it could be approximated unbiasedly via a vanilla Monte Carlo approximation using a single sample point $\smash{\particles = (z^{\mathrlap{1}}, \dotsc, z^K) \sim q_\phi^{\otimes K}}$. Unfortunately, the term $\smash{G_\psi(\particles)}$ typically has such a large variance that the Monte Carlo approximation becomes impracticably noisy \citep{paisley2012variational}. To remove this high-variance term, the well known \emph{reparametrisation trick} \citep{kingma2014auto} is usually employed. It requires that the following assumption holds. 
\begin{enumerate}[label=\textbf{(R\arabic*)}, ref=\textbf{R\arabic*}]
 \item \textsl{\label{as:proposal:reparametrisation} There exists a distribution $q$ on some space $\spaceE$ and a diffeomorphism $h_{\phi} \colon \spaceE \to \spaceZ$ such that $\particleReparametrised \sim q$ $\Leftrightarrow$ $ h_{\phi}(\particleReparametrised) \sim q_{\phi}$.}
\end{enumerate}
Under \ref{as:proposal:reparametrisation}, the gradient can alternatively be expressed as
\begin{align}
 \nabla_\psi \calL_{\psi}^K
 & =  \E_{\particleReparametrised^{\mathrlap{1}},\dotsc,\particleReparametrised^K \iidSim q}\bigl[\nabla_\psi \log \widehat{\normConst}_\theta\langle \phi, \{h_\phi(\particleReparametrised^k)\}_{k=1}^K \rangle \bigr] 
 \\
 & = 
 \E_{\particleReparametrised^{\mathrlap{1}},\dotsc,\particleReparametrised^K \iidSim q}\biggl[
 \sum_{k=1}^K \frac{w_{\psi}(h_{\phi}(\particleReparametrised^k))}{\sum_{l=1}^K w_{\psi}(h_{\phi}(\particleReparametrised^l))} 
  \nabla_\psi \log w_{\psi}(h_{\phi}(\particleReparametrised^k))
 \biggr]\\
 & = \E\biggl[
 \sum_{k=1}^K \frac{w_{\psi}(z^k)}{\sum_{l=1}^K w_{\psi}(z^l)} 
   \begin{pmatrix}
  \nabla_\theta \log \gamma_\theta(z^k)\\
   \blacktriangledown_\psi(z^k) - \nabla_\phi \log q_\phi(z^k)
  \end{pmatrix}
     \biggr], \label{eq:iwae:phi_gradient_exact}
\end{align}
with 
 \begin{equation}
  \blacktriangledown_\psi(z) \coloneqq \nabla_\phi [{\log} \circ {w_{\psi'}} \circ {h_{\phi}}]|_{\psi' = \psi}(h_{\phi}^{-1}(z)).
 \end{equation}
\gls{IWAE} then uses a vanilla Monte Carlo estimate of \eqref{eq:iwae:phi_gradient_exact} (using a single sample point $\smash{\particlesDistributed}$):
\begin{equation}
  \begin{bmatrix}
    \widehat{\nabla}_\theta^\IWAE\langle \phi, \particles\rangle\\
     \widehat{\nabla}_\phi^\IWAE\langle \theta, \particles\rangle
  \end{bmatrix}
  \coloneqq 
  \sum_{k=1}^{\smash{K}} \frac{w_{\psi}(z^k)}{\sum_{l=1}^K w_{\psi}(z^l)}
  \begin{bmatrix}
  \nabla_\theta \log \gamma_\theta(z^k)\\
   \blacktriangledown_\psi(z^k) - \nabla_\phi \log q_\phi(z^k)
  \end{bmatrix}.
  \label{eq:iwae:phi-gradient_approximation}
\end{equation}

\paragraph{$\phi$-gradient issues.} Before proceeding, we state the following lemma, proved in \citet[][Section~8.1]{tucker2019reparametrised}, which generalises of the well-known identity $q_\phi(\nabla_\phi \log q_\phi) = 0$.

\begin{lemma}[\citet{tucker2019reparametrised}] \label{lem:tucker} Under \ref{as:proposal:reparametrisation}, for suitably integrable $f_\psi \colon \spaceZ \to \reals$:
\begin{equation}
 q_\phi(f_\psi \nabla_\phi \log q_\phi) =  q_\phi({\nabla_\phi [{f_{\psi'}} \circ {h_\phi}]|_{\psi' = \psi}} \circ {h_\phi^{-1}}). \tag*{\qedwhite}
\end{equation}
\end{lemma}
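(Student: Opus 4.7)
The plan is to prove the identity by decoupling the two ways $\phi$ enters the integrand: as the parameter of $f_\psi$ (since $\psi=(\theta,\phi)$) and as the parameter of $q_\phi$ (equivalently, of the reparametrisation $h_\phi$). The standard score-function identity $q_\phi(\nabla_\phi \log q_\phi)=0$ corresponds to the trivial case $f_\psi\equiv 1$, and the trick is to carry out the same manipulation while treating the $\psi$ appearing in $f_\psi$ as an inert placeholder $\psi'$ that is set equal to $\psi$ only at the very end.

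Concretely, I would define the auxiliary function $F(\phi,\psi') \coloneqq \int_{\spaceZ} f_{\psi'}(z)\, q_\phi(z)\intDiff z$ and differentiate in the first argument in two different ways. Since $\psi'$ is held fixed, the standard log-derivative trick gives
\begin{equation}
\nabla_\phi F(\phi,\psi')\big|_{\psi'=\psi} = \int_{\spaceZ} f_\psi(z)\, \nabla_\phi q_\phi(z)\intDiff z = q_\phi(f_\psi \nabla_\phi \log q_\phi),
\end{equation}
which is the left-hand side of the claimed identity.

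For the right-hand side, I would rewrite $F$ using Assumption \ref{as:proposal:reparametrisation}: since $h_\phi(\particleReparametrised)\sim q_\phi$ when $\particleReparametrised\sim q$, we have $F(\phi,\psi') = \int_{\spaceE} f_{\psi'}(h_\phi(\particleReparametrised))\, q(\particleReparametrised)\intDiff \particleReparametrised$. Because $q$ does not depend on $\phi$, differentiation in $\phi$ passes inside the integral and acts only on the composition $f_{\psi'}\circ h_\phi$, yielding
\begin{equation}
\nabla_\phi F(\phi,\psi') = \int_{\spaceE} \nabla_\phi[f_{\psi'}\circ h_\phi](\particleReparametrised)\, q(\particleReparametrised)\intDiff \particleReparametrised.
\end{equation}
Reparametrising back via $z=h_\phi(\particleReparametrised)$ and then setting $\psi'=\psi$ gives exactly $q_\phi\bigl({\nabla_\phi[f_{\psi'}\circ h_\phi]|_{\psi'=\psi}}\circ h_\phi^{-1}\bigr)$, matching the right-hand side.

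The proof is essentially bookkeeping; the only real obstacle is notational. The step that must be stated carefully is the interchange of the differentiation in $\phi$ with the integral under the reparametrised form, which requires the ``suitably integrable'' hypothesis on $f_\psi$ (enough regularity in $\phi$ and a dominating integrable bound so that dominated convergence applies). Everything else is the chain rule together with the observation that freezing $\psi'$ cleanly isolates the two sources of $\phi$-dependence, which is precisely what makes the identity non-trivial beyond the trivial case $f_\psi\equiv 1$.
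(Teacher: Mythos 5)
Your proof is correct: the paper itself gives no proof of Lemma~\ref{lem:tucker}, deferring to \citet[Section~8.1]{tucker2019reparametrised}, and your argument --- introducing the placeholder $\psi'$, writing $F(\phi,\psi')=\int_\spaceZ f_{\psi'}(z)\,q_\phi(z)\intDiff z$ both in score-function form and in reparametrised form under \ref{as:proposal:reparametrisation}, differentiating each in $\phi$, and equating before setting $\psi'=\psi$ --- is essentially the same derivation used there. The only caveat you already flag yourself, namely that the interchange of $\nabla_\phi$ with the integral is exactly what the ``suitably integrable'' hypothesis is meant to license.
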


We now exclusively focus on the $\phi$-portion of the \gls{IWAE} gradient, $\smash{\widehat{\nabla}_{\phi}^\IWAE\langle \theta, \particles \rangle}$. 


\begin{remark}[drawbacks of the \gls{IWAE} $\phi$-gradient] \label{rem:drawbacks} The gradient $\smash{\widehat{\nabla}_{\phi}^\IWAE\langle \theta, \particles \rangle}$ has three drawbacks. The last two of these are attributable to the `score-function' terms $\nabla_\phi \log q_\phi(z)$ in the $\phi$-gradient portion of \eqref{eq:iwae:phi-gradient_approximation}.\\[-3.5ex]
 \begin{itemize}
  \item \textbf{Reliance on reparametrisations.} A continuous reparametrisation \`a la \ref{as:proposal:reparametrisation} is necessary to remove the high-variance term $\smash{G_\psi(\particles)}$; this makes it difficult to use \gls{IWAE} for models with e.g.\ discrete latent variables $z$ \citep{le2019revisiting}.
  
  \item \textbf{Vanishing signal-to-noise ratio.} The $\phi$-gradient breaks down in the sense that its signal-to-noise ratio vanishes as $\smash{\E[\widehat{\nabla}_{\phi}^\IWAE\langle \theta, \particles \rangle] / \var[\widehat{\nabla}_{\phi}^\IWAE\langle \theta, \particles \rangle]^{1/2} = \bo(K^{-1/2})}$ \citep{rainforth2018tighter}. This follows from Part~\ref{prop:properties_of_importance_sampling:2} of Proposition~\ref{prop:properties_of_importance_sampling} since $\smash{\widehat{\nabla}_{\phi}^\IWAE\langle \theta, \particles \rangle}$ constitutes a self-normalised importance-sampling approximation of $\smash{\pi_\theta(\blacktriangledown_\psi - \nabla_\phi \log q_\phi) = 0}$ (the last identity follows from Lemma~\ref{lem:tucker} with $f_\psi = w_\psi$).

  \item \textbf{Inability to achieve zero variance.}
  As pointed out in \citet{roeder2017sticking}, $\smash{\var[\widehat{\nabla}_{\phi}^\IWAE\langle \theta, \particles \rangle]] > 0}$ even in the ideal scenario that $q_\phi = \pi_\theta$ despite the fact that in this case, $\smash{w_\psi}$ is constant and hence $\smash{\var[\log \widehat{\normConst}_\theta\langle\phi, \particles\rangle] = 0}$.
 \end{itemize}
\end{remark}

%

Two modifications of $\smash{\widehat{\nabla}_{\phi}^\IWAE\langle \theta, \particles \rangle}$ have been proposed which (under \ref{as:proposal:reparametrisation}) avoid the score-function terms in \eqref{eq:iwae:phi-gradient_approximation} and hence
\begin{enumerate*}[label=(\alph*)]
 \item exhibit a stable signal-to-noise ratio as $K \to \infty$ and
 \item can achieve zero variance if $q_\phi = \pi_\theta$ (because then $\blacktriangledown_\psi \equiv 0$ since $w_\psi$ is constant).
\end{enumerate*}

\glsunset{IWAESTL}
\glsunset{IWAEDREG}

\begin{itemize}
 \item \textbf{\gls{IWAESTL}.} 
 The \glsreset{IWAESTL} \emph{\gls{IWAESTL}} gradient proposed by \citet{roeder2017sticking} heuristically ignores the score function terms (this introduces bias relative to $\smash{\widehat{\nabla}_\phi^\IWAE\langle \phi, \particles\rangle}$ whenever $K > 1$ as shown in \citet{tucker2019reparametrised}):
\begin{equation}
   \widehat{\nabla}_{\phi}^\IWAESTL\langle \theta, \particles \rangle
  \coloneqq  
 \sum_{k=1}^K \frac{w_\psi(z^k)}{\sum_{l=1}^K w_\psi(z^l)} \blacktriangledown_\psi(z^k). \label{eq:iwae-stl} 
\end{equation}
 
 \item \textbf{\gls{IWAEDREG}.} 
 The \glsreset{IWAEDREG} \emph{\gls{IWAEDREG}} gradient proposed by \citet{tucker2019reparametrised} removes the score-function terms through Lemma~\ref{lem:tucker} (i.e.\ this does not introduce bias relative to $\smash{\widehat{\nabla}_\phi^\IWAE\langle \phi, \particles\rangle}$):
\begin{equation}
 \widehat{\nabla}_{\phi}^\IWAEDREG\langle \theta, \particles \rangle
 \coloneqq \sum_{k=1}^K \biggl(\frac{w_\psi(z^k)}{\sum_{l=1}^Kw_\psi(z^l)} \biggr)^{\!\mathrlap{2}\,}\blacktriangledown_\psi(z^k). \label{eq:iwae-dreg}
\end{equation}

\end{itemize}


\glsreset{RWS}
\subsection{\Glsentryfull{RWS}}
\glsreset{RWS}

The \emph{\gls{RWS}} algorithm was proposed in \citet{bornschein2015reweighted}.\footnote{Following \citet{tucker2019reparametrised} (based on empirical results in \citet{le2019revisiting}), we only use the `wake-phase' $\phi$-updates for \gls{RWS}.} Letting $\KL(p\|q) \coloneqq \int_\spaceZ \log(p(z)/q(z)) q(z) \intDiff z$ is the \gls{KL}-divergence from $p$ to $q$, the \gls{RWS} algorithm seeks to optimise $\psi = (\theta, \phi)$ as 
\begin{gather}
\theta^\star \coloneqq \theta^\ML = \argmax_\theta \log \normConst_{\theta}, \label{eq:rws:objective:theta}\\
 \phi^\star \coloneqq \argmin_\phi \KL(\target_{\theta^\star} \| q_{\phi}).\label{eq:rws:objective:phi}
\end{gather}
The $\theta$- and $\phi$-gradients
\begin{equation}
 \begin{bmatrix}
    \nabla_\theta \log \normConst_{\theta}\\
   - \nabla_\phi \KL(\target_{\theta} \| q_{\phi})\\
 \end{bmatrix}
  = 
  \target_{\theta}
 {\begin{pmatrix}
 \nabla_\theta \log \uTarget_{\theta}\\
  \nabla_\phi \log q_{\phi}\\
 \end{pmatrix}}, \label{eq:rws:gradients_exact}
\end{equation}
are usually intractable and therefore approximated by replacing $\pi_\theta$ by the self-normalised importance sampling approximation $\smash{\hat{\target}_{\theta}\langle \phi, \particles\rangle}$ (note that this does not need \ref{as:proposal:reparametrisation}):
\begin{equation}
  \begin{bmatrix}
  \widehat{\nabla}_\theta^\RWS\langle \phi, \particles\rangle\\
  \widehat{\nabla}_\phi^\RWS\langle \theta, \particles\rangle\\
 \end{bmatrix}
 \coloneqq 
 \sum_{k=1}^{\smash{K}} \frac{w_{\psi}(z^k)}{\sum_{l=1}^K w_{\psi}(z^l)}
  {\begin{bmatrix}
 \nabla_\theta \log \uTarget_{\theta}(z^k)\\
  \nabla_\phi \log q_{\phi}(z^k)\\
 \end{bmatrix}}.\label{eq:rws:gradients_approximation}
\end{equation}
Since \eqref{eq:rws:gradients_approximation} relies on self-normalised importance sampling, it biased relative to \eqref{eq:rws:gradients_exact}. However, by Part~\ref{prop:properties_of_importance_sampling:2} of Proposition~\ref{prop:properties_of_importance_sampling} the bias of the $\theta$-gradient $\smash{
 \widehat{\nabla}_\theta^\RWS\langle \phi, \particles\rangle = \widehat{\nabla}_\theta^\IWAE\langle \phi, \particles\rangle}$ relative to $\smash{\nabla_\theta \log \normConst_\theta}$ decays as $\smash{\bo(K^{-1})}$. Appendix~\ref{app:sec:bias} discusses the impact of the bias on the $\phi$-gradients. 

The optimisation of $\theta$ and $\phi$ is carried out simultaneously. This is because (a) a better proposal $q_\phi$ reduces both bias and variance of (self-normalised) importance-sampling approximations and can therefore be leveraged for reducing the bias and variance of the $\theta$-gradients and (b) this strategy reduces the computational cost because the same set of particles $\particles$ and weights $\{w_{\psi}(z^k)\}_{k=1}^K$ is shared by both gradients. However, this simultaneous optimisation is often viewed as the main drawback of \gls{RWS} because there is no joint objective (for both $\theta$ and $\phi$).

\glsunset{RWSDREG}

\paragraph{\gls{RWSDREG}.} 

 Under \ref{as:proposal:reparametrisation}, 
\citet{tucker2019reparametrised} proposed the following \glsreset{RWSDREG} \emph{\gls{RWSDREG}} gradient which is equal to $\smash{\widehat{\nabla}_\phi^\RWS\langle \theta, \particles\rangle}$ in expectation and is derived by applying Lemma~\ref{lem:tucker} to the latter:
 \begin{equation}
  \widehat{\nabla}_\phi^\RWSDREG\langle \theta, \particles\rangle
  \coloneqq 
 \sum_{k=1}^{\smash{K}} \biggl[\frac{w_{\psi}(z^k)}{\sum_{l=1}^K w_{\psi}(z^l)} - \biggl(\frac{w_{\psi}(z^k)}{\sum_{l=1}^K w_{\psi}(z^l)}\biggr)^{\!\smash{2}}\biggr] \blacktriangledown_\psi(z^k).
 \label{eq:rws:gradient_approximation:variance_reduced:unprincipled}
 \end{equation}

\glsunset{AISLE}
\section{\glsentryshort{AISLE}: A unified adaptive importance-sampling framework}
\label{sec:aisle}

\subsection{Objective}
\label{subsec:aisle:objective}

\glsreset{AISLE}

If $\theta$ is fixed, the \gls{RWS} algorithm reduces to an adaptive importance-sampling scheme which optimises the proposal distribution by minimising the \gls{KL}-divergence from the target distribution $\target_\theta$ to the proposal $q_\phi$ \citep[see, e.g.,][]{douc2007convergence, cappe2008adaptive}. If instead $\phi$ is fixed, the \gls{RWS} algorithm reduces to a stochastic-approximation algorithm for estimating the \gls{MLE} of the generative-model parameters $\theta$. The advantage of optimising $\theta$ and $\phi$ simultaneously is that
\begin{enumerate*}[label=(\alph*)]
 \item Monte Carlo samples used to approximate the $\theta$-gradient can be re-used to approximate the $\phi$-gradient and 
 \item optimising $\phi$ typically reduces the error (both in terms of bias and variance) of the $\theta$-gradient approximation.
\end{enumerate*}

However, adapting the proposal distribution $q_\phi$ in importance-sampling schemes need not necessarily be based on minimising the \gls{KL}-divergence. Numerous other techniques exist in the literature \citep[e.g.\@][]{geweke1989bayesian, evans1991adaptive, oh1992adaptive, richard2007efficient, cornebise2008adaptive} and may sometimes be preferable. Indeed, another popular approach with strong theoretical support is based on minimising the $\chi^2$-divergence \citep[see, e.g.,][]{akyildiz2019convergence}. Based on this insight, we slightly generalise the \gls{RWS}-objective as
\begin{align}
    \theta^\star &\coloneqq \argmax_\theta \log \normConst_{\theta} (=\theta^\ML), \label{eq:aisle:objective:theta}\\
    \phi^\star &\coloneqq \argmin_\phi \Div\nolimits_{\divFun}(\target_{\theta^\star} \| q_{\phi}).\label{eq:aisle:objective:phi}
\end{align}
Here, $\Div_\divFun(p\|q) \coloneqq \int_\spaceZ \divFun(p(z)/q(z)) q(z) \intDiff z$ is some $\divFun$-divergence from $p$ to $q$. We reiterate that alternative approaches for optimising $\phi$ (which do not minimise $\divFun$-divergences) could be used. However, we state \eqref{eq:aisle:objective:phi} for concreteness as it suffices for the remainder of this work; we call the resulting algorithm \emph{\gls{AISLE}}. We stress again that \gls{AISLE} is \emph{not} introduced with the aim or claim of proposing a new algorithms but to formalise the argument that the adaptive importance-sampling paradigm avoids the drawbacks from Remark~\ref{rem:drawbacks} thus making it preferable to the multi-sample objective paradigm.



\subsection{$\theta$-gradient}
\label{subsec:aisle:gradients}
 Optimisation is again performed via a stochastic gradient-ascent. The intractable $\theta$-gradient $\smash{\nabla_\theta \log \normConst_\theta = \target_\theta(\nabla_\theta \log \uTarget_\theta)}$ is approximated as in \gls{RWS}, i.e.\ for $\smash{\particlesDistributed}$:
%
\begin{equation}
  \widehat{\nabla}_\theta^\AISLE\langle \phi, \particles\rangle
 \coloneqq \widehat{\nabla}_\theta^\RWS\langle \phi, \particles\rangle = \widehat{\nabla}_\theta^\IWAE\langle \phi, \particles\rangle. \label{eq:aisle:theta-gradient}
\end{equation}
The $\theta$-gradient is thus the same for all algorithms discussed in this work although the \gls{IWAE}-paradigm views it as an unbiased gradient for a biased objective while \gls{AISLE} (and \gls{RWS}) interpret it as a self-normalised importance-sampling (and hence biased) approximation of the gradient $\nabla_\theta \log \normConst_\theta$ for the `exact' objective.

\subsection{$\phi$-gradient special case I: \glsentryshort{RWS} and \glsentryshort{IWAESTL}}
\label{subsec:aisle:special_case_I}

The $\phi$-gradients depend on the particular choice of $\divFun$-divergence in \eqref{eq:aisle:objective:phi}. By construction, we recover \gls{RWS} as a special case of \gls{AISLE} if we define the $\divFun$-divergence through $\divFun(y) \coloneqq y \log y$ because in this case $\Div\nolimits_\divFun(p\|q) = \KL(p\|q)$ reduces to the \gls{KL}-divergence. Our main contribution in this subsection is to show that a more principled application of the identity from Lemma~\ref{lem:tucker} leads to the \gls{IWAESTL} gradient from \eqref{eq:iwae-stl}.

To derive the \gls{AISLE} $\phi$-gradients for this divergence we note that 
\begin{equation}
 - \nabla_\phi \KL(\target_{\theta} \| q_{\phi} )
  = \target_\theta(\nabla_\phi \log q_{\phi}), \label{eq:aisle:derivation_of_aisle-kl-norep}
\end{equation}
which, under \ref{as:proposal:reparametrisation}, by Lemma~\ref{lem:tucker} with $f_\psi = w_\psi$, can be written as
\begin{equation}
 \target_\theta(\nabla_\phi \log q_{\phi}) 
= q_\phi(w_\psi \nabla_\phi \log q_{\phi}) / \normConst_\theta
 = q_\phi(w_\psi \blacktriangledown_\psi) / \normConst_\theta
 = \target_\theta(\blacktriangledown_\psi). \label{eq:aisle:derivation_of_aisle-kl}
\end{equation}
We then obtain practical approximations of these gradients by plugging in $\smash{\hat{\target}_{\theta}\langle \phi, \particles\rangle}$ for $\target_\theta$.

\begin{itemize} 
 \item 
 
 \textbf{\gls{AISLEKLNOREP}\slash{}\gls{RWS}.} Without relying on any reparametrisation, \eqref{eq:aisle:derivation_of_aisle-kl-norep} yields the following gradient, which clearly equals $\smash{\widehat{\nabla}_\phi^\RWS\langle \theta, \particles\rangle}$:
 \begin{equation}
 \widehat{\nabla}_\phi^\AISLEKLNOREP\langle \theta, \particles\rangle
 \coloneqq \sum_{k=1}^{\smash{K}} \frac{w_{\psi}(z^k)}{\sum_{l=1}^K w_{\psi}(z^l)}
  \nabla_\phi \log q_\phi(z^k). \label{eq:aisle-kl-norep}
 \end{equation}

 \item 
 \textbf{\gls{AISLEKL}.} Using the reparametrisation from \ref{as:proposal:reparametrisation}, \eqref{eq:aisle:derivation_of_aisle-kl} yields the gradient:
 \begin{equation}
 \widehat{\nabla}_\phi^\AISLEKL\langle \theta, \particles\rangle
  \coloneqq \sum_{k=1}^{\smash{K}} \frac{w_{\psi}(z^k)}{\sum_{l=1}^K w_{\psi}(z^l)}
  \blacktriangledown_\psi(z^k). \label{eq:aisle-kl}
 \end{equation}
\end{itemize}


We thus arrive at the following result which demonstrates that \gls{IWAESTL} can be derived in a principled manner from \gls{AISLE}, i.e.\ without the need for a multi-sample objective.

\begin{proposition} \label{prop:aisle-kl_and_iwae-stl}
 For any $(\theta, \phi, \particles)$, $\smash{\widehat{\nabla}_\phi^\AISLEKL\langle \theta, \particles\rangle = \widehat{\nabla}_\phi^\IWAESTL\langle \theta, \particles\rangle}$. $\qedwhite$
\end{proposition}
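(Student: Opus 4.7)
The plan is essentially to observe that this proposition is a direct by-inspection comparison of the two displayed formulas \eqref{eq:iwae-stl} and \eqref{eq:aisle-kl}: both are self-normalised importance-sampling estimators with weights $w_\psi(z^k)/\sum_l w_\psi(z^l)$ applied to the same integrand $\blacktriangledown_\psi(z^k)$, summed over $k=1,\dotsc,K$. So the only thing to do in the formal ``proof'' is to write the two expressions side by side and note they coincide termwise for every triple $(\theta,\phi,\particles)$.

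The conceptual content, however, lies upstream of this trivial equality, in the derivation of \eqref{eq:aisle-kl} that was already carried out in equations \eqref{eq:aisle:derivation_of_aisle-kl-norep}--\eqref{eq:aisle:derivation_of_aisle-kl}. I would therefore structure the write-up by first briefly recalling that chain: starting from the exact KL $\phi$-gradient $\target_\theta(\nabla_\phi \log q_\phi)$, applying Lemma~\ref{lem:tucker} with the choice $f_\psi = w_\psi$ converts it to $\target_\theta(\blacktriangledown_\psi)$, and then plugging in the self-normalised importance-sampling approximation $\smash{\hat{\target}_\theta\langle\phi,\particles\rangle}$ produces exactly \eqref{eq:aisle-kl}. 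Comparing this to the expression \eqref{eq:iwae-stl} written down by \citet{roeder2017sticking} finishes the argument.

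There is no real obstacle here; the whole point of stating the proposition is conceptual rather than technical. The substantive observation being advertised is that the \IWAESTL{} gradient, originally introduced heuristically by dropping score-function terms from the \gls{IWAE} $\phi$-gradient, arises without any heuristic step as the natural self-normalised importance-sampling approximation of the reparametrised KL gradient within the \gls{AISLE}/\gls{RWS} adaptive-importance-sampling paradigm. Accordingly, my ``proof'' will be a single short paragraph stating this identification and pointing to \eqref{eq:iwae-stl} and \eqref{eq:aisle-kl} for the equality, with one sentence recalling the role of Lemma~\ref{lem:tucker} in producing \eqref{eq:aisle-kl}.
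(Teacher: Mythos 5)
Your proposal is correct and matches the paper exactly: the two definitions \eqref{eq:aisle-kl} and \eqref{eq:iwae-stl} are literally the same expression, which is why the paper states the proposition with an immediate $\qedwhite$ and places all the substantive work in the upstream derivation \eqref{eq:aisle:derivation_of_aisle-kl-norep}--\eqref{eq:aisle:derivation_of_aisle-kl} via Lemma~\ref{lem:tucker} with $f_\psi = w_\psi$. Your framing of the proposition as conceptual rather than technical is precisely the paper's own framing.
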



Proposition~\ref{prop:aisle-kl_and_iwae-stl} thus provides a theoretical basis for \gls{IWAESTL} which was previously viewed as an alternative gradient for \gls{IWAE} for which it is biased and only heuristically justified. Furthermore, the fact that \gls{IWAESTL} exhibited good empirical performance in \citet{tucker2019reparametrised} even in an example in which \gls{RWS} broke down, suggests that this breakdown may not be due to \gls{RWS}' lack of optimising a joint objective as previously conjectured.

Finally, recall that \citet{tucker2019reparametrised} obtained an alternative `doubly-reparametrised' \gls{RWS} $\phi$-gradient $\smash{\widehat{\nabla}_\phi^\RWSDREG\langle \theta, \particles\rangle}$ given in \eqref{eq:rws:gradient_approximation:variance_reduced:unprincipled} by \emph{first} replacing the exact (but intractable) $\phi$-gradient from \eqref{eq:aisle:derivation_of_aisle-kl-norep} by the self-normalised importance-sampling approximation $\smash{\widehat{\nabla}_\phi^\RWS\langle \theta, \particles\rangle}$ and \emph{then} applying the identity from Lemma~\ref{lem:tucker}. Note that this may result in a variance reduction but does not change the bias of the gradient estimator. In contrast, \gls{AISLEKL} is derived by \emph{first} applying Lemma~\ref{lem:tucker} to the exact (\gls{RWS}) $\phi$-gradient and \emph{then} approximating the resulting expression. 
This can potentially reduce both bias and variance.
%

\subsection{$\phi$-gradient special case II: \glsentryshort{IWAEDREG}}
\label{subsec:aisle:special_case_II}
We now demonstrate that the \gls{IWAEDREG} gradient can be recovered as a special case of \gls{AISLE} (up to a proportionality constant). To establish this relationship, we take $\divFun(y) \coloneqq (y-1)^2$ so that 
\begin{equation}
 \Div\nolimits_\divFun(p\|q) = \chisq(p\|q) 
 \coloneqq \int_\spaceZ \biggl(\frac{p(z)}{q(z)} - 1\biggr)^{\!\mathrlap{2}} \, q(z) \intDiff z 
 = \int_\spaceZ \frac{p(z)}{q(z)} p(z) \intDiff z - 1,
\end{equation}
is the $\smash{\chi^2}$-divergence. Minimising this divergence is natural in importance sampling since $\smash{\chisq(\target_\theta\|q_\phi) = \var_{z \sim q_\phi}[w_\psi/\normConst_\theta]}$ is the variance of the importance weights. 

To derive the \gls{AISLE} $\phi$-gradients for this divergence we note that 
\begin{equation}
 - \nabla_\phi \chisq(\target_{\theta} \| q_{\phi} )
 = - \target_\theta(\nabla_\phi w_\psi) / \normConst_\theta
 = \target_\theta(w_\psi \nabla_\phi \log q_\phi) / \normConst_\theta, \label{eq:aisle:derivation_of_aisle-chisq-norep}
\end{equation}
which, under \ref{as:proposal:reparametrisation}, by Lemma~\ref{lem:tucker} with $f_\psi = w_\psi^2$, can be written as
\begin{align}
 \target_\theta(w_\psi \nabla_\phi \log q_\phi) / \normConst_\theta
 &= q_\phi(w_\psi^2 \nabla_\phi \log q_\phi) /  \normConst_\theta^2 \\
 & = q_\phi(w_\psi^2 \nabla_\phi [{\log} \circ w_{\psi^{\mathrlap{\prime}}}^2 \circ h_\phi]|_{\psi^\prime = \psi} \circ {h_\phi^{-1}}) /  \normConst_\theta^2\\
 & = \target_\theta(2 w_\psi \blacktriangledown_\psi) / \normConst_\theta.
 \label{eq:aisle:derivation_of_aisle-chisq}
\end{align}
Again plugging in $\smash{\hat{\target}_{\theta}\langle \phi, \particles\rangle}$ for $\target_\theta$ and $\smash{\widehat{\normConst}_\theta\langle \phi, \particles\rangle}$ for $\normConst_\theta$ yields the following approximations.

\begin{itemize} 
 \item \textbf{\gls{AISLECHISQNOREP}.} Without relying on any reparametrisation, \eqref{eq:aisle:derivation_of_aisle-chisq-norep} yields the following gradient which is also proportional to the `score gradient' from \citet[][Appendix~G]{dieng2017variational}:
 \begin{equation}
 \widehat{\nabla}_\phi^\AISLECHISQNOREP\langle \theta, \particles\rangle
  \coloneqq K \sum_{k=1}^{\smash{K}} \biggl(\frac{w_{\psi}(z^k)}{\sum_{l=1}^K w_{\psi}(z^l)}\biggr)^{\mathrlap{2}}
  \nabla_\phi \log q_\phi(z^k).\label{eq:aisle-chisq-norep}
 \end{equation}
 \item \textbf{\gls{AISLECHISQ}.} Using the reparametrisation from \ref{as:proposal:reparametrisation}, \eqref{eq:aisle:derivation_of_aisle-chisq} yields the gradient:
 \begin{equation}
 \widehat{\nabla}_\phi^\AISLECHISQ\langle \theta, \particles\rangle
 \coloneqq 2 K \sum_{k=1}^{\smash{K}} \biggl(\frac{w_{\psi}(z^k)}{\sum_{l=1}^K w_{\psi}(z^l)}\biggr)^{\mathrlap{2}}
  \blacktriangledown_\psi(z^k). \label{eq:aisle-chisq}
 \end{equation}
\end{itemize}

We thus arrive at the following result which demonstrates that \gls{IWAEDREG} 
can be derived (up to the proportionality factor $2K$) in a principled manner from \gls{AISLE}, i.e.\ without the need for a multi-sample objective.

\begin{proposition} \label{prop:aisle-chisq_and_iwae-dreg}
 For any $(\theta, \phi, \particles)$, $\smash{\widehat{\nabla}_\phi^\AISLECHISQ\langle \theta, \particles\rangle = 2 K \widehat{\nabla}_\phi^\IWAEDREG\langle \theta, \particles\rangle}$. $\qedwhite$
\end{proposition}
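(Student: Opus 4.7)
The plan is to verify the identity by direct comparison of the two defining expressions. Inspecting~\eqref{eq:iwae-dreg}, the \IWAEDREG{} gradient is $\sum_{k=1}^K \bigl(w_\psi(z^k) / \sum_l w_\psi(z^l)\bigr)^2 \blacktriangledown_\psi(z^k)$, and the \AISLECHISQ{} gradient in~\eqref{eq:aisle-chisq} is exactly this sum pre-multiplied by the scalar $2K$. Hence the identity follows immediately, without any further algebra. The substantive content of the result has in fact already been delivered by the derivation~\eqref{eq:aisle:derivation_of_aisle-chisq-norep}--\eqref{eq:aisle:derivation_of_aisle-chisq} preceding the proposition, and the statement merely records the resulting formula side-by-side with the one from the \IWAEDREG{} literature of \citet{tucker2019reparametrised}.

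Because the proof itself is a one-line inspection, the only thing worth double-checking is the origin of the prefactor $2K$ in~\eqref{eq:aisle-chisq}, as this is what makes the two expressions differ. The factor of $2$ comes from applying Lemma~\ref{lem:tucker} with $f_\psi = w_\psi^2$: since $\nabla_\phi[\log \circ w_{\psi'}^2 \circ h_\phi]|_{\psi' = \psi} = 2 \nabla_\phi[\log \circ w_{\psi'} \circ h_\phi]|_{\psi' = \psi}$, one obtains the leading $2$ in $\target_\theta(2 w_\psi \blacktriangledown_\psi)/\normConst_\theta$ on the last line of~\eqref{eq:aisle:derivation_of_aisle-chisq}. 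The factor of $K$ arises from substituting the unbiased estimator $\widehat{\normConst}_\theta\langle\phi,\particles\rangle = K^{-1}\sum_l w_\psi(z^l)$ for the residual $1/\normConst_\theta$ that sits outside the self-normalised-IS ratio in that same expression; combining this $K$ with the two normalised weights in the numerator and denominator then produces precisely the squared normalised weights in~\eqref{eq:aisle-chisq}.

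Thus there is no genuine obstacle in the proof: given the chain of identities already assembled in Section~\ref{subsec:aisle:special_case_II}, the proposition is established by matching the summands of~\eqref{eq:aisle-chisq} and~\eqref{eq:iwae-dreg} term-by-term and reading off the global prefactor. The only place one could go wrong is bookkeeping with the prefactors $2$ and $K$, so the write-up will simply point the reader back to~\eqref{eq:aisle:derivation_of_aisle-chisq} and to the definition~\eqref{eq:iwae-dreg}, and conclude by inspection.
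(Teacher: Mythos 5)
Your proposal is correct and matches the paper's treatment exactly: the proposition is stated with an immediate $\qedwhite$ because, once \eqref{eq:aisle-chisq} and \eqref{eq:iwae-dreg} are written down, the identity is a term-by-term inspection, with all the substance residing in the derivation \eqref{eq:aisle:derivation_of_aisle-chisq-norep}--\eqref{eq:aisle:derivation_of_aisle-chisq}. Your bookkeeping of the prefactor (the $2$ from $\log \circ\, w_{\psi'}^2 = 2 \log \circ\, w_{\psi'}$ in Lemma~\ref{lem:tucker}, and the $K$ from substituting $\widehat{\normConst}_\theta\langle\phi,\particles\rangle = K^{-1}\sum_l w_\psi(z^l)$ for the residual $1/\normConst_\theta$) is also accurate.
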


Note that if the implementation normalises the gradients, e.g.\ as effectively done by \gls{ADAM} \citep{kingma2015adam}, the constant factor cancels out and \gls{AISLECHISQ} becomes equivalent to \gls{IWAEDREG}. Otherwise (e.g.\ in plain stochastic gradient-ascent) Proposition~\ref{prop:aisle-chisq_and_iwae-dreg} shows that the learning rate needs to be scaled as $\bo(K)$ for the \gls{IWAE} or \gls{IWAEDREG} $\phi$-gradients. 

\section{Conclusion}
\label{sec:conclusion}
\glsreset{AISLE}
\glsreset{IWAE}
\glsreset{RWS}
\glsreset{IWAESTL}
\glsreset{IWAEDREG}

We have shown that the adaptive-importance sampling paradigm of the \emph{\gls{RWS}} \citep{bornschein2015reweighted} is preferable to the multi-sample objective paradigm of \emph{\glspl{IWAE}} \citep{burda2016importance} because the former achieves all the goals of the latter whilst avoiding its drawbacks. To formalise this argument, we have introduced a simple, unified adaptive-importance-sampling framework termed \emph{\gls{AISLE}} (which slightly generalises the \gls{RWS} algorithm) and have proved that \gls{AISLE} allows us to derive the \emph{\gls{IWAESTL}} gradient from \citet{roeder2017sticking} and the \emph{\gls{IWAEDREG}} gradient from \citet{tucker2019reparametrised} as special cases.



We hope that this work highlights the potential for further improving variational techniques by drawing upon the vast body of research on (adaptive) importance sampling in the computational statistics literature. Conversely, the methodological connections established in this work may also serve to emphasise the utility of the reparametrisation trick from \citet{kingma2014auto, tucker2019reparametrised} to computational statisticians. 


In a companion article \citep{finke2019variational}, we extend the present work to the \emph{\glsdesc{VSMC}} methods from \citet{maddison2017filtering, le2018auto, naeseth2018variational} and to the \emph{tensor Monte Carlo} approach from \citet{aitchison2018tensor}.

%

\renewcommand*{\bibfont}{\footnotesize}
\setlength{\bibsep}{3pt plus 0.3ex}
\bibliography{sub.bbl}

\appendix

\section{On the r\^ole of the self-normalisation bias within \gls{RWS}\slash{}\gls{AISLE}}
\label{app:sec:bias}

\subsection{The self-normalisation bias}
\label{app:subsec:self-normalisation_bias}


Within the self-normalised importance-sampling approximation, the number of particles, $K$, interpolates between two extremes:
\begin{itemize}
 \item As $K \uparrow \infty$, $\smash{\hat{\pi}_\theta\langle \phi, \mathbf{z}\rangle}(f)$ becomes an increasingly accurate approximation of $\pi_\theta(f)$.
 \item For $K=1$, however, $\smash{\hat{\pi}_\theta\langle \phi, \mathbf{z}\rangle}(f) = f(z^1)$ reduces to a vanilla Monte Carlo approximation of $q_\phi(f)$ (because the single self-normalised importance weight is always equal to $1$).
\end{itemize}
This leads to the following insight about the estimators $\smash{\widehat{\nabla}_\phi^\AISLEKL\langle \theta, \mathbf{z}\rangle}$ and $\smash{\widehat{\nabla}_\phi^\AISLECHISQ\langle \theta, \mathbf{z}\rangle}$.
\begin{itemize}
 \item As $K \uparrow \infty$, these two estimators become increasingly accurate approximations of the \emph{`inclusive'}-divergence gradients $\smash{-\nabla_\phi \KL(\pi_\theta\|q_\phi) = \pi_\theta(\blacktriangledown_\phi)}$ and $\smash{-\nabla_\phi \chisq(\pi_\theta\|q_\phi) = 2\pi_\theta([w_\psi/\normConst_\theta]\blacktriangledown_\phi)}$, respectively.
 
 \item For $K=1$, however, these two estimators reduce to vanilla Monte Carlo approximations of the \emph{`exclusive'}-divergence gradients $\smash{-\nabla_\phi \KL(q_\phi\|\pi_\theta) = q_\phi(\blacktriangledown_\phi)}$ and $\smash{-2 \nabla_\phi \KL(q_\phi\|\pi_\theta) = 2 q_\phi(\blacktriangledown_\phi)}$, respectively.
\end{itemize}
This is similar to the standard \gls{IWAE} $\phi$-gradient which also represents a vanilla Monte Carlo approximation of $\smash{-\nabla_\phi \KL(q_\phi\|\pi_\theta)}$ if $K = 1$ as \gls{IWAE} reduces to a \gls{VAE} in this case.

Characterising the small-$K$ self-normalisation bias of the reparametrisation-free \gls{AISLE} $\phi$ gradients, \gls{AISLEKLNOREP} and \gls{AISLECHISQNOREP}, is more difficult because if $K = 1$, they constitute vanilla Monte Carlo approximations of $\smash{q_\phi(\nabla_\phi \log q_\phi) = 0}$. Nonetheless, \citet[][Figure~5]{le2019revisiting} lends some support to the hypothesis that the small-$K$ self-normalisation bias of these gradients also favours a minimisation of the exclusive \gls{KL}-divergence.

\subsection{Inclusive vs exclusive \gls{KL}-divergence minimisation}
\label{app:subsec:inclusive_vs_exclusive}

Recall that the main motivation for use of \glspl{IWAE} (instead of \glspl{VAE}) was the idea that we could use self-normalised importance-sampling approximations with $K > 1$ particles to reduce the bias of the $\theta$-gradient relative to $\smash{\nabla_\theta \log \normConst_\theta}$. The error of such (self-normalised) importance-sampling approximations can be controlled by ensuring that $q_\phi$ is close to $\pi_\theta$ (in some suitable sense) in any part of the space $\spaceZ$ in which $\pi_\theta$ has positive probability mass. For instance, it is well known that the error will be small if the `inclusive' \gls{KL}-divergence $\smash{\KL(\pi_\theta \| q_\phi)}$ is small as this implies well-behaved importance weights. In contrast, a small `exclusive' \gls{KL}-divergence $\smash{\KL(q_\phi \| \pi_\theta)}$ is not sufficient for well-behaved importance weights because the latter only ensures that $q_\phi$ is close to $\pi_\theta$ in those parts of the space $\spaceZ$ in which $q_\phi$ has positive probability mass.

Let $\mathcal{Q} \coloneqq \{q_\phi\}$ (which is indexed by $\phi$) be the family of proposal distributions\slash{}the variational family. Then we can distinguish two scenarios.
\begin{enumerate}
 \item \label{enum:sufficiently_expressive_variational_family}

\textbf{Sufficiently expressive $\mathcal{Q}$.} For the moment, assume that the family $\mathcal{Q}$ is flexible (`expressive') enough in the sense that it contains a distribution $q_{\phi^\star}$ which is (at least approximately) equal to $\pi_\theta$ and that our optimiser can reach the value $\phi^\star$ of $\phi$. In this case, minimising the exclusive \gls{KL}-divergence can still yield well-behaved importance weights because in this case, $\smash{\phi^\star \coloneqq \argmin_\phi \KL(\pi_\theta \| q_\phi)}$ is (at least approximately) equal to $\smash{\argmin_\phi \KL(q_\phi \| \pi_\theta)}$. 

\item \label{enum:insufficiently_expressive_variational_family} \textbf{Insufficiently expressive $\mathcal{Q}$.} In general, the family $\mathcal{Q}$ is not flexible enough in the sense that all of its members are `far away' from $\pi_\theta$, e.g.\ if the $D$ components $z_1, \dotsc, z_D$ of $z = z_{1:D}$ are highly correlated under $\pi_\theta$ whilst $\smash{q_\phi(z) = \prod_{d=1}^D q_{\phi,d}(z_d)}$ is fully factorised. In this case, minimising the exclusive \gls{KL}-divergence could lead to poorly-behaved importance weights and we should optimise $\smash{\phi^\star \coloneqq \argmin_\phi \KL(\pi_\theta \| q_\phi)}$ as discussed above.

\end{enumerate}

\begin{remark} \label{rem:minimising_exclusive_divergence_preferable}
 In Scenario~\ref{enum:sufficiently_expressive_variational_family} above, i.e.\ for a sufficiently flexible $\mathcal{Q}$, using a gradient-descent algorithm which seeks to minimise the \emph{exclusive} divergence can sometimes be preferable to a gradient-descent algorithm which seeks to minimise the \emph{inclusive} divergence. This is because both find (approximately) the same optimum but the latter may exhibit faster convergence in some applications. In such scenarios, the discussion in Subsection~\ref{app:subsec:self-normalisation_bias} indicates that a smaller number of particles, $K$, could then be preferable for some of the $\phi$-gradients because 
 \begin{enumerate*}[label=(\alph*)]
  \item the $\smash{\bo(K^{-1})}$ self-normalisation bias outweighs the $\smash{\bo(K^{-1/2})}$ standard deviation and 
  \item the direction of this bias may favour faster convergence.
 \end{enumerate*}
\end{remark}

Unfortunately, simply setting $K=1$ for the approximation of the $\phi$-gradients\footnote{Within the \gls{IWAE}-paradigm, using different numbers of particles for the $\theta$ and $\phi$-gradients was recently proposed in \citet{rainforth2018tighter, le2018auto} who termed this approach \emph{`alternating \glsdescplural{ELBO}'}, albeit their aim was to circumvent the signal-to-noise ratio breakdown of the \gls{IWAE} $\phi$-gradient which is distinct from the phenomenon discussed here.} is not necessarily optimal because
\begin{itemize}
 \item even in the somewhat idealised scenario \ref{enum:sufficiently_expressive_variational_family} above and even if the direction of the self-normalisation bias encourages faster convergence, increasing $K$ is still desirable to reduce the variance of the gradient approximations;
 
 \item not using the information contained in \emph{all} $K$ particles and weights (which have already been sampled\slash{}calculated to approximate the $\theta$-gradient) seems wasteful;
 
 \item if $K= 1$, the reparametrisation-free \gls{AISLE}$\phi$ gradients, \gls{AISLEKLNOREP} and \gls{AISLECHISQNOREP} are simply vanilla Monte Carlo estimates of $0$ and the \gls{RWSDREG} $\phi$-gradient is then equal to $0$.
\end{itemize}

\subsection{Regularisation}
\label{subsec:bias-variance_trade-offs}

We propose here to `regularise' the importance weights. That is, letting
\begin{align}
 \mathit{ESS}(w_{1:K}) \coloneqq \biggl[\sum_{k=1}^K \biggl( \frac{w_k}{\sum_{l=1}^K w_l}\biggr)^{\!\!2\,}\biggr]^{-1} \in [1,K],
\end{align}
 be the \emph{effective sample size} \citep{kong1994sequential, liu1996metropolized}, we propose to replace the weights $\smash{w_{\psi,x}(z^k) \eqqcolon w_k}$ in any of the gradients discussed in this work by $\smash{w_k^{\mathrlap{\alpha^\star}}}$, where $\smash{\alpha^\star \coloneqq \sup\{\alpha \in [0,1] : \mathit{ESS}(w_{1:K}^\alpha) \geq \eta K\}}$ can be inexpensively found with bisection; the tuning parameter $\eta \in (0,1]$ governs the amount of regularisation.

This weight-regularisation strategy typically reduces the variance but increases the bias of (self-normalised) importance-sampling approximations $\smash{\hat{\pi}_\theta\langle \phi, \mathbf{z}\rangle(f)}$ relative to $\pi_\theta(f)$. When applied to the $\phi$-gradients, it may be interpreted in two different ways.
\begin{enumerate}
 \item Within Scenario~\ref{enum:sufficiently_expressive_variational_family}, we can view such a regularisation strategy as a means of interpolating between $K$-sample vanilla Monte Carlo approximations of gradients of the exclusive \gls{KL}-divergence ($\alpha^\star = 0$) and self-normalised importance-sampling approximations of inclusive-divergence type gradients ($\alpha^\star = 1$).
 
 \item Within Scenario~\ref{enum:insufficiently_expressive_variational_family}, we can view such a regularisation strategy as a means reducing the overall approximation error of importance sampling through a more favourable bias--variance trade-off \citep{ionides2008truncated}. Thus, in this scenario, we interpret the regularisation as a way of attaining $\phi$-gradients whose overall error (relative to the intractable inclusive-divergence gradient) is reduced. 
\end{enumerate}
We note that the second interpretation applies to the $\theta$-gradient in either scenario. Furthermore, such weight regularisation also circumvents the signal-to-noise ratio breakdown in the standard \gls{IWAE} $\phi$-gradient. For an alternative weight-regularisation strategy used in the context of variational inference, see \citet{bamler2017perturbative}.

Finally, a more principled approach would be to regularise the problem rather than the approximation. That is, we could instead regularise the generative model, i.e.\ replace $\target_{\theta,x}(z)$ by a regularised distribution, e.g.\ by a distribution proportional to $\smash{\target_{\vartheta,x}(z)^{\alpha} q_{\varphi,x}(z)^{1-\alpha}}$ for $\alpha \in (0,1]$ (and also replacing $\theta$ by $(\vartheta, \varphi)$). We are currently investigating such ideas.

\section{Empirical illustration}
\label{app:sec:illustrations}

\subsection{Algorithms}
\label{app:subsec:algorithms}

\glsunset{RWSDREG}
\glsunset{AISLEKLNOREP}
\glsunset{AISLEKL}
\glsunset{AISLECHISQNOREP}
\glsunset{AISLECHISQ}

In these supplementary materials, we illustrate the different $\phi$-gradient estimators (recall that all algorithms discussed in this work share the same $\theta$-gradient estimator). Specifically, we compare the following approximations.
\begin{itemize}
 \item \textbf{\gls{AISLEKLNOREP}.} \label{enum:aisle-kl-norep}
 The gradient for \gls{AISLE} based on the \gls{KL}-divergence without any further reparametrisation from \eqref{eq:aisle-kl-norep} i.e.\ this coincides with the standard \gls{RWS}-gradient from \eqref{eq:rws:gradients_approximation}. This gradient does not require \ref{as:proposal:reparametrisation} but does not achieve zero variance even if $q_\phi = \pi_\theta$.
 

 \item\label{enum:aisle-kl} \textbf{\gls{AISLEKL}.} 
  The gradient for \gls{AISLE} based on the \gls{KL}-divergence after reparametrising and exploiting the identity from Lemma~\ref{lem:tucker}; it is given by \eqref{eq:aisle-kl} and coincides with the \gls{IWAESTL}-gradient from \eqref{eq:iwae-stl}.
  
 \item \label{enum:aisle-chisq-norep} \textbf{\gls{AISLECHISQNOREP}.} 
  The gradient for \gls{AISLE} based on the $\chi^2$-divergence without any reparametrisation given in \eqref{eq:aisle-chisq-norep}. This gradient again does not require \ref{as:proposal:reparametrisation} but does not achieve zero variance even if $q_\phi = \pi_\theta$.
 \item \label{enum:aisle-chisq} \textbf{\gls{AISLECHISQ}.}  
 The gradient for \gls{AISLE} based on the $\chi^2$-divergence after reparametrising and exploiting the identity from Lemma~\ref{lem:tucker}; it is given by \eqref{eq:aisle-chisq} and is alsow proportional to \gls{IWAEDREG} from \citet{tucker2019reparametrised} which was stated in \eqref{eq:iwae-dreg}. When normalising the gradients (as, e.g.\ implicitly done by optimisers such as \gls{ADAM} \citealp{kingma2015adam}) the proportionality constant cancels out so that both these gradient approximations lead to computationally the same algorithm.
 
 
 \item \label{enum:iwae} \textbf{\gls{IWAE}.} The gradient for \gls{IWAE} employing the reparametrisation trick from \citet{kingma2014auto}. Its sampling approximation is given in \eqref{eq:iwae:phi-gradient_approximation}. Recall that this is the $\phi$-gradient whose signal-to-noise ratio degenerates with $K$ as pointed out in \citet{rainforth2018tighter} (and which also cannot achieve zero variance even if $q_\phi = \pi_\theta$).
 
  \item \label{enum:iwae-dreg} \textbf{\gls{IWAEDREG}.}  
  The `doubly-reparametrised' \gls{IWAE} gradient from  \eqref{eq:iwae-dreg} which was proposed in \citet{tucker2019reparametrised}. It is proportional to \gls{AISLECHISQ}.
 
 \item \label{enum:rws-dreg} \textbf{\gls{RWSDREG}.} The `doubly-reparametrised' \gls{RWS} $\phi$-gradient from \eqref{eq:rws:gradient_approximation:variance_reduced:unprincipled} which was proposed in \citet{tucker2019reparametrised} who derived it by applying the identity from Lemma~\ref{lem:tucker} to the \gls{RWS} $\phi$-gradient.
 

\end{itemize}

\subsection{Model}

\paragraph{Generative model.} 

We have $N$ $D$-dimensional observations $\smash{x^{(1)}, \dotsc, x^{(N)} \in \reals^{D}}$ and $N$ $D$-dimensional latent variables $\smash{z^{(1)}, \dotsc, z^{(N)} \in \reals^{D}}$. Unless otherwise stated, any vector $y \in \reals^D$ is to be viewed as a $D \times 1$ column vector. 

Hereafter, wherever necessary, we add an additional subscript to make the dependence on the observations explicit. The joint law (the `generative model'), parametrised by $\theta$, of the observations and latent variables then factorises as
\begin{align}
  \prod_{n=1}^N p_\theta(z^{(n)}) p_\theta(x^{(n)}|z^{(n)}) = \prod_{n=1}^N \uTarget_{\theta,x^{(n)}}(z^{(n)}).
\end{align}
We model each latent variable--observation pair $(z, x)$ as
\begin{align}
 p_\theta(z) & \coloneqq \dN(z; \mu, \varSigma),\\
 p_\theta(x|z) & \coloneqq \dN(x; z; \iMat),
\end{align}
where $\smash{\theta \coloneqq \mu = \mu_{1:D} \in \reals^{D}}$, where $\smash{\varSigma \coloneqq (\sigma_{d,d'})_{(d,d') \in \{1,\dotsc,D\}} \in \reals^{D \times D}}$ is assumed to be known and where $\iMat$ denotes the $D\times D$-identity matrix. For any $\theta$, 
\begin{align}
 \normConst_{\theta, x} & = p_{\theta}(x) =  \dN(x; \mu, \iMat + \varSigma), \label{eq:toy_gaussian_normalising_constant}\\
\target_{\theta,x}(z) & = p_{\theta}(z|x) =  \dN(z; \nu_{\theta,x}, P),
 \label{eq:ex:toy_gaussian_posterior}
\end{align}
with $\smash{P \coloneqq (\varSigma^{-1} + \iMat)^{-1}}$ and $\smash{\nu_{\theta,x} \coloneqq P(\varSigma^{-1}\mu + x)}$. In particular, \eqref{eq:toy_gaussian_normalising_constant} implies that $\smash{\theta^\ML = \frac{1}{N} \sum_{n = 1}^N x^{(n)}}$.

\paragraph{Proposal\slash{}variational approximation.} We take the proposal distributions as a fully-factored Gaussian:
\begin{align}
 q_{\phi,x}(z) \coloneqq \dN(z; A x + b, C), \label{eq:ex:toy_gaussian_proposal}
\end{align}
where $A = (a_{d,d'})_{(d,d') \in \{1,\dotsc,D\}^2} \in \reals^{D \times D}$, $b = b_{1:D} \in \reals^{D}$ and, for  $c_{1:D} \eqqcolon c \in \reals^{D}$, $C \coloneqq \diag(\eul^{2c_1},\dotsc,\eul^{2c_D})$. The parameters to optimise are thus
\begin{align}
 \phi \coloneqq (a_{1}^\T, \dotsc, a_{D}^\T, b^\T, c^\T),
\end{align}
where $\smash{a_d \coloneqq [a_{d,1}, a_{d,2}, \dotsc, a_{d,D}]^\T \in \reals^{D\times1}}$ denotes the column vector formed by the elements in the $d$th row of $A$. Furthermore, for the reparametrisation trick, we take $\smash{q(\particleReparametrised) \coloneqq \dN(\particleReparametrised; 0, \iMat)}$, where $\smash{0 \in \reals^{D}}$ is a vector whose elements are all $0$, so that
\begin{align}
 h_{\phi,x}(\particleReparametrised) & \coloneqq Ax + b + C^{1/2} \particleReparametrised,
\end{align}
which means that $\smash{h_{\phi,x}^{-1}(z) = C^{-1/2}(z - Ax - b)}$.

Note that the mean of the proposal in \eqref{eq:ex:toy_gaussian_proposal} coincides with the mean of the posterior in \eqref{eq:ex:toy_gaussian_posterior} if $A = P$ and $b = P \varSigma^{-1} \mu$.

This model is similar to the one used as a benchmark in \citet[][Section~4]{rainforth2018tighter} and also in \citet[][Section~6.1]{tucker2019reparametrised} who specified both the generative model and the variational approximation to be isotropic Gaussians. Specifically, their setting can be recovered by taking $\varSigma \coloneqq \iMat$ and fixing $c_d = \log(2/3)/2$ so that $C = \frac{2}{3} \iMat$ throughout. Here, in order to investigate a slightly more realistic scenario, we also allow for the components of the latent vectors $z$ to be \emph{correlated/dependent} under the generative model. However, as the variational approximation remains restricted to being fully factored, it may fail to fully capture the uncertainty about the latent variables. 

\paragraph{Gradient calculations.}
 We end this subsection by stating the expressions needed to calculate the gradients in the Gaussian example presented above. Throughout, we use the \emph{denominator-layout} notation for vector and matrix calculus and sometimes write $\smash{\particleReparametrised = \particleReparametrised_{1:D} = h_{\phi,x}^{-1}(z)}$ to simplify the notation. Thus,
\begin{align}
 \nabla_\theta \log \uTarget_{\theta,x}(z) 
 & = \smash{\varSigma^{-1}(z - \mu) \in \reals^{D},}\\
 \nabla_z \log \uTarget_{\theta,x}(z)
 & = \smash{\varSigma^{-1}(\mu - z) + x - z\in \reals^{D},} \label{eq:path_derivative_gamma}\\
 \nabla_z \log q_{\phi,x}(z) 
 & = \smash{-C^{-1}(z - Ax - b)}\\
 & = \smash{-C^{-1/2} \particleReparametrised\in \reals^{D}.} \label{eq:path_derivative_q}
\end{align}
Let $\smash{a_d \coloneqq [a_{d,1}, a_{d,2}, \dotsc, a_{d,D}]^\T \in \reals^{D\times1}}$ denote the column vector formed by the elements in the $d$th row of $A$. Then, letting $\odot$ denote elementwise multiplication and using the convention that addition or subtraction of the scalar $1$ is to be done elementwise,
\begin{align}
 \nabla_{a_d} \log q_{\phi,x}(z) 
 & = \smash{C^{-1}(z_d-  a_d^\T x - b_d)x}\\
 & = \smash{C^{-1/2} \particleReparametrised_d x \in \reals^{D}, \quad d \in \{1,\dotsc,D\},}\\
 \nabla_b \log q_{\phi,x}(z) 
 & = \smash{C^{-1}(z - Ax - b)}\\
 & = \smash{C^{-1/2} \particleReparametrised \in \reals^{D},}\\
 \nabla_c \log q_{\phi,x}(z) 
 & = \smash{C^{-1/2} (z - Ax - b) \odot C^{-1/2} (z - Ax - b) - 1}\\
 & = \smash{\particleReparametrised \odot\particleReparametrised - 1 \in \reals^{D},}
\end{align}
Furthermore, write $\smash{h_{\phi,x} = [h_{\phi,x,1}, \dotsc, h_{\phi,x,D}]^\T}$, i.e.\
\begin{align}
 h_{\phi,x,d}(\particleReparametrised) = z_d = a_d^\T x + b_d + \exp(c_d) \particleReparametrised_d,
\end{align}
and let ${\iota^{(d)} = [0,\dotsc,0,1,0,\dotsc,0]^\T \in \reals^{D}}$ be the vector whose entries are all $0$ except for the $d$th entry which is $1$. Then, for $\smash{d \in \{1,\dotsc,D\}}$, 
\begin{align}
 [\nabla_{a_{d'}} h_{\phi,x,d}](\particleReparametrised) 
 & = \smash{\ind\{d = d'\} x \in \reals^{D}, \quad d' \in \{1,\dotsc,D\},} \label{eq:a_d_path_derivative_gradient}\\
 [\nabla_b h_{\phi,x,d}](\particleReparametrised) 
 & = \smash{\iota^{(d)} \in \reals^{D},}\label{eq:b_path_derivative_gradient}\\
 [\nabla_c h_{\phi,x,d}](\particleReparametrised) 
  & = \smash{\exp(c_d)\particleReparametrised_d \iota^{(d)} \in \reals^{D}.} \label{eq:c_path_derivative_gradient}
\end{align}
Again writing $\smash{\particleReparametrised = h_{\phi,x}^{-1}(z)}$ implies that
\begin{align}
 \nabla_{\phi} [{\log} \circ {w_{\psi^{\mathrlap{\prime}},x}} \circ {h_{\phi,x}}]|_{\psi^\prime = \psi} (\particleReparametrised)
 & = [\nabla_\phi h_{\phi,x,1}, \dotsc, \nabla_\phi h_{\phi,x,D}](\particleReparametrised) \nabla_{z} \log w_{\psi,x}(z),
\end{align}
so that, letting $\smash{[\nabla_{z} \log w_{\psi,x}(z)]_d}$ denote the $d$th element of the vector $\smash{\nabla_{z} \log w_{\psi,x}(z)}$,
\begin{align}
 \nabla_{a_d} [{\log} \circ {w_{\psi^{\mathrlap{\prime}},x}} \circ {h_{\phi,x}}]|_{\psi^\prime = \psi} (\particleReparametrised)
 & = [\nabla_{z} \log w_{\psi,x}(z)]_d x,\\
  \nabla_{b} [{\log} \circ {w_{\psi^{\mathrlap{\prime}},x}} \circ {h_{\phi,x}}]|_{\psi^\prime = \psi} (\particleReparametrised)
 & = \nabla_z \log w_{\psi,x}(z),\\
 \nabla_{c} [{\log} \circ {w_{\psi^{\mathrlap{\prime}},x}} \circ {h_{\phi,x}}]|_{\psi^\prime = \psi} (\particleReparametrised)
 & = \particleReparametrised \odot C^{1/2} \nabla_z \log w_{\psi,x}(z).
\end{align}
From this, since
\begin{align}
 \nabla_{\phi} [{\log} \circ {w_{\psi,x}} \circ {h_{\phi,x}}](\particleReparametrised)
 & = \nabla_{\phi} [{\log} \circ {w_{\psi^{\mathrlap{\prime}},x}} \circ {h_{\phi,x}}]|_{\psi^\prime = \psi}(\particleReparametrised) - \nabla_\phi \log q_{\phi,x}(z),
\end{align}
we have that
\begin{align}
 \nabla_{a_d} [{\log} \circ {w_{\psi,x}} \circ {h_{\phi,x}}] (\particleReparametrised)
 & = ([\nabla_{z} \log w_{\psi,x}(z)]_d - C^{-1/2} \particleReparametrised_d) x,\\
  \nabla_{b} [{\log} \circ {w_{\psi,x}} \circ {h_{\phi,x}}] (\particleReparametrised)
 & = \nabla_z \log w_{\psi,x}(z) - C^{-1/2}\particleReparametrised,\\
 \nabla_{c} [{\log} \circ {w_{\psi,x}} \circ {h_{\phi,x}}] (\particleReparametrised)
 & = \particleReparametrised \odot C^{1/2} \nabla_z \log w_{\psi,x}(z) - \particleReparametrised \odot \particleReparametrised + 1.
\end{align}

\paragraph{Impact of the reparametrisation.} We end this subsection by briefly illustrating the impact of the reparametrisation trick combined with the identity from \citet{tucker2019reparametrised} which was given in Lemma~\ref{lem:tucker}. Recall that this approach yields $\phi$-gradients that are expressible as integrals of path-derivative functions $\smash{\blacktriangledown_{\psi,x} \coloneqq  \nabla_{\phi} [{\log} \circ {w_{\psi^{\mathrlap{\prime}},x}} \circ {h_{\phi,x}}]|_{\psi^\prime = \psi} \circ {h_{\phi,x}^{-1}}}$, Thus, if there exists a value $\phi$ such that $\smash{q_{\phi,x} = \target_{\theta,x}}$ then $w_{\psi,x} \propto \target_{\theta,x}/q_{\phi,x} \equiv 1$ is constant so that we obtain zero-variance $\phi$-gradients \citep[see, e.g.,][for a discussion on this]{roeder2017sticking}.

For simplicity, assume that $\varSigma = \iMat$ and recall that we then have $\smash{q_{\phi^{\mathrlap{\star}}\,,x} = \target_{\theta,x}}$ if the values $(A, b, C)$ implied by $\phi^\star$ are $(A^{\mathrlap{\star}}\,, b^{\mathrlap{\star}}\,, C^\star)=(\frac{1}{2}\iMat, \frac{1}{2} \mu, \frac{1}{2}\iMat)$. 

By \eqref{eq:path_derivative_gamma} and \eqref{eq:path_derivative_q}, and with the usual convention $\smash{\particleReparametrised = h_{\phi,x}^{-1}(z)}$, we then have
\begin{align}
 \nabla_{z} \log w_{\psi,x}(z) 
 & = (x+\mu) - 2z + C^{-1} (z - Ax - b)\\
 & = 2 [(A^\star x+ b^\star) - (Ax + b) + C^{-1/2}(C^\star - C) \particleReparametrised]. \label{eq:log_weight_path_derivative_illustration}
\end{align}
Note that the only source of randomness in this expression is the multivariate normal random variable $\particleReparametrised$. Thus, by \eqref{eq:a_d_path_derivative_gradient} and \eqref{eq:b_path_derivative_gradient}, for \emph{any} values of $A$ and $b$ and \emph{any} $K\geq1$, the variance of the $A$- and $b$-gradient portion of \gls{AISLEKL}\slash{}\gls{IWAESTL} and \gls{AISLECHISQ}\slash{}\gls{IWAEDREG} goes to zero as $C \to C^\star = \frac{1}{2}\iMat$. In other words, in this model, these `score-function free' $\phi$-gradients achieve (near) zero variance for the parameters governing the proposal mean as soon as the variance-parameters fall within a neighbourhood of their optimal values. Furthermore, \eqref{eq:c_path_derivative_gradient} combined with \eqref{eq:log_weight_path_derivative_illustration} shows that for \emph{any} $K \geq 1$, the variance of the $C$-gradient portion also goes to zero as $(A, b, C) \to (A^{\mathrlap{\star}}\,, b^{\mathrlap{\star}}\,, C^\star)$. A more thorough analysis of the benefits of reparametrisation-trick gradients in Gaussian settings is carried out in \citet{xu2019variance}.

\subsection{Simulations}

\paragraph{Setup.}
We end this section by empirically comparing the algorithms from Subsection~\ref{app:subsec:algorithms}. We run each of these algorithms for a varying number of particles, $K \in \{1,10,100\}$, and varying model dimensions, $D \in \{2,5,10\}$. Each of these configurations is repeated independently $250$ times. Each time using a new synthetic data set consisting of $N = 25$ observations sampled from the generative model after generating a new `true' prior mean vector as $\mu \sim \dN(0, \iMat)$. Since all the algorithms share the same $\theta$-gradient, we focus only on the optimisation of $\phi$ and thus simply fix $\theta \coloneqq \theta^\ML$ throughout. We show results for the following model settings.
\begin{itemize}

 \item \textbf{Figure~\ref{fig:diagonal}.}  The generative model is specified via $\varSigma = \iMat$. In this case, there exists a value $\phi^\star$ of $\phi$ such that $q_{\phi,x}(z) = \target_{\theta,x}(z)$. Note that this corresponds to Scenario~\ref{enum:sufficiently_expressive_variational_family} in Subsection~\ref{app:subsec:inclusive_vs_exclusive}. 
 
 
 \item \textbf{Figure~\ref{fig:nondiagonal}.} The generative model is specified via $\smash{\varSigma = (0.95^{\lvert d-d'\rvert +1})_{(d,d') \in \{1,\dotsc,D\}^2}}$. Note that in this case, the fully-factored variational approximation cannot fully mimic the dependence structure of the latent variables under the generative model. That is, in this case, $q_{\phi,x}(z) \neq \target_{\theta,x}(z)$ for any values of $\phi$. Note that this corresponds to Scenario~\ref{enum:insufficiently_expressive_variational_family} in Subsection~\ref{app:subsec:inclusive_vs_exclusive}.
 
 
\end{itemize} 

To initialise the gradient-ascent algorithm, we draw each component of the initial values  $\phi_0$ of $\phi$ \gls{IID} according to a standard normal distribution. We use both plain stochastic gradient-ascent with the gradients normalised to have unit $\mathbb{L}_1$-norm (Figures~\ref{fig:diagonal:sga_standard_weights}, \ref{fig:diagonal:sga_regularised_weights}, \ref{fig:nondiagonal:sga_standard_weights}, \ref{fig:nondiagonal:sga_regularised_weights}) and \gls{ADAM}  \citep{kingma2015adam} with default parameter values (Figures~\ref{fig:diagonal:adam_standard_weights}, \ref{fig:diagonal:adam_regularised_weights}, \ref{fig:nondiagonal:adam_standard_weights}, \ref{fig:nondiagonal:adam_regularised_weights}). In each case, we also show results for the `regularised importance weights' strategy from Subsection~\ref{subsec:bias-variance_trade-offs} with tuning parameter $\eta = 0.8$ (Figures~\ref{fig:diagonal:sga_regularised_weights}, \ref{fig:diagonal:adam_regularised_weights}, \ref{fig:nondiagonal:sga_regularised_weights}, \ref{fig:nondiagonal:adam_regularised_weights}). The total number of iterations is $\smash{10,000}$; in each case, the learning-rate parameters at the $i$th step are $\smash{i^{-1/2}}$. 

We also ran the algorithms in each of the above-mentioned scenarios with fixed values of $c_d$, e.g.\ as in \citet{rainforth2018tighter, tucker2019reparametrised}. However, we omit the results as this did not significantly change the relative performance of the different algorithms. For the same reason, we omit results related to the optimisation of $A$ and $C$.

\newgeometry{left=1cm, right=1cm, top=1cm,bottom=1cm}
\thispagestyle{empty}
\begin{figure}[!ht]
\begin{subfigure}{0.5\linewidth}

\centering
\includegraphics[scale=0.57, trim=0.5cm 1cm 0cm 0cm]{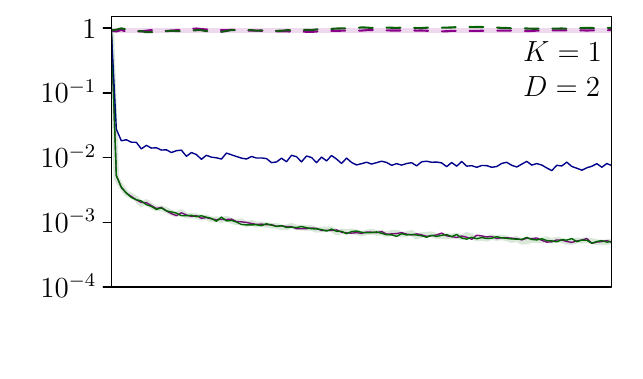}
\includegraphics[scale=0.57, trim=1.5cm 1cm 0cm 0cm]{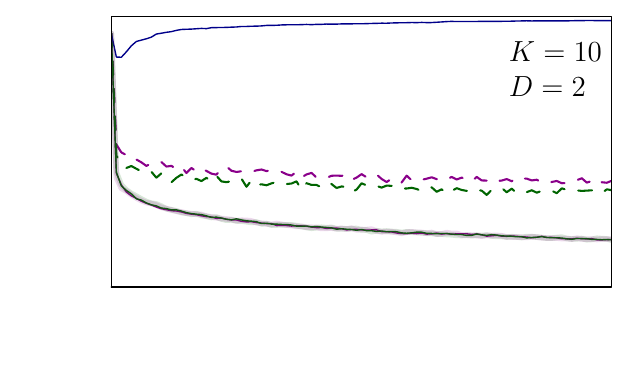}
\includegraphics[scale=0.57, trim=1.5cm 1cm 0cm 0cm]{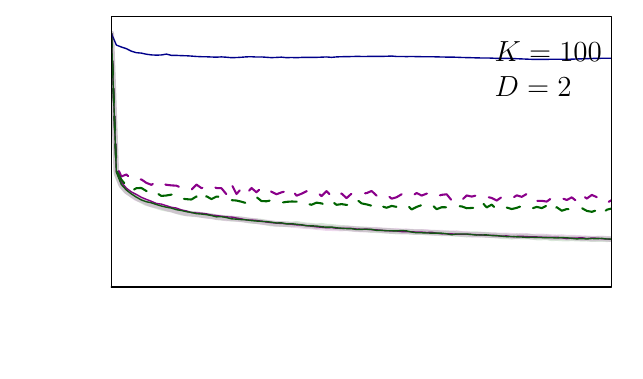}\\
\includegraphics[scale=0.57, trim=0.5cm 1cm 0cm 0cm]{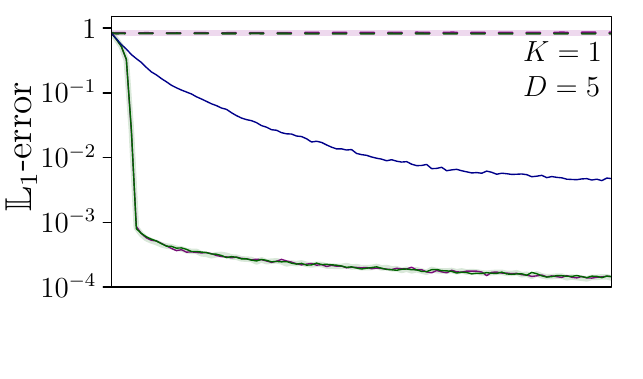}
\includegraphics[scale=0.57, trim=1.5cm 1cm 0cm 0cm]{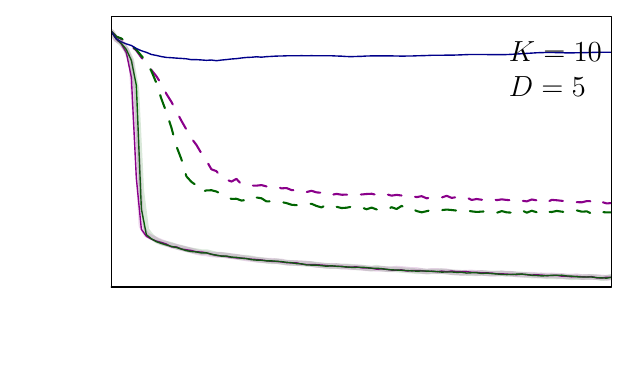}
\includegraphics[scale=0.57, trim=1.5cm 1cm 0cm 0cm]{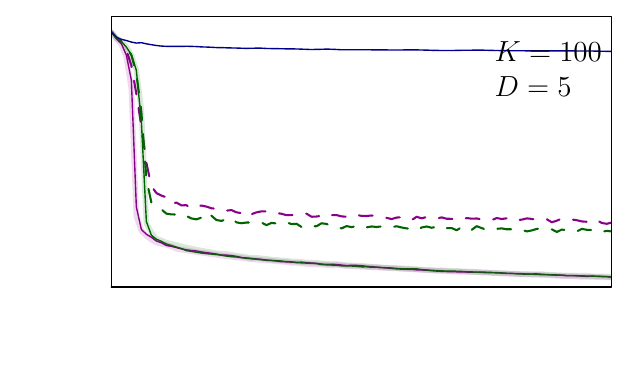}\\
\includegraphics[scale=0.57, trim=0.5cm 0cm 0cm 0cm]{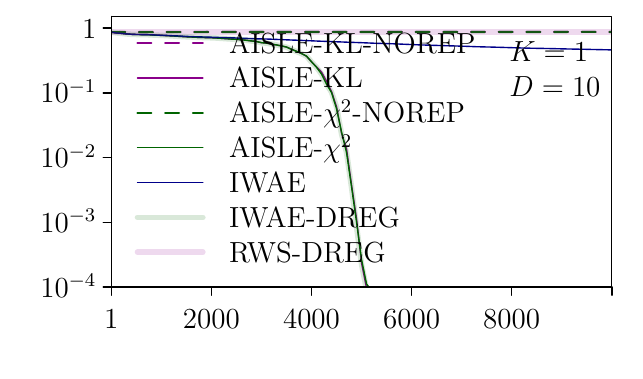}
\includegraphics[scale=0.57, trim=1.5cm 0cm 0cm 0cm]{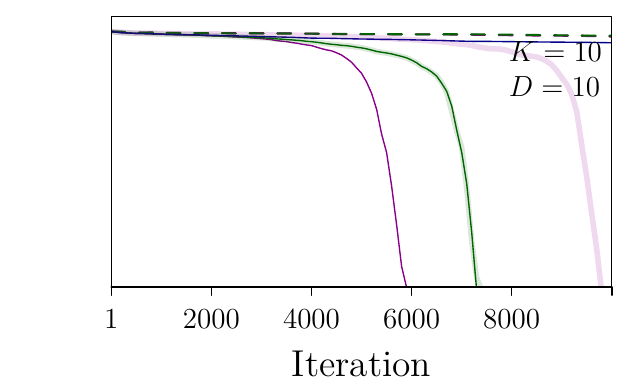}
\includegraphics[scale=0.57, trim=1.5cm 0cm 0cm 0cm]{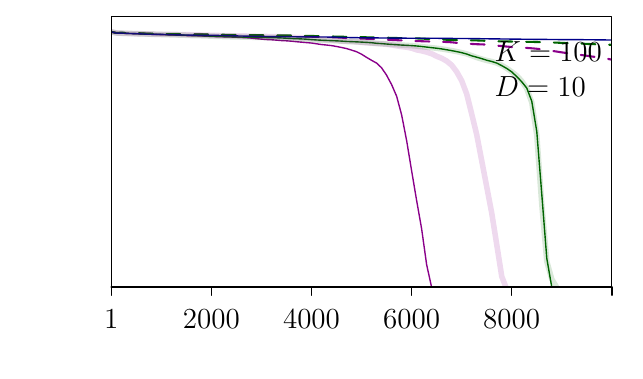} 

\caption{Gradient ascent with standard weights.}
 \label{fig:diagonal:sga_standard_weights}

\end{subfigure}
\begin{subfigure}{0.5\linewidth}

 \centering
\includegraphics[scale=0.57, trim=0.5cm 1cm 0cm 0cm]{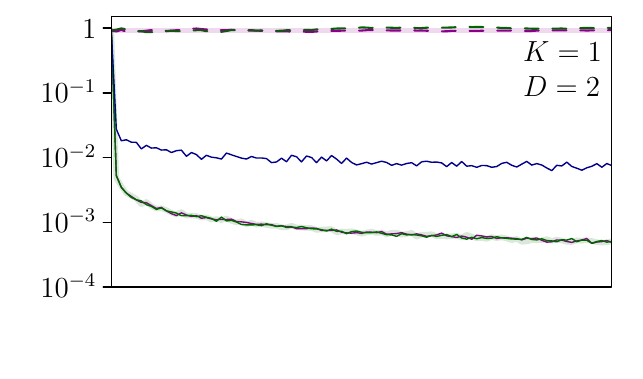}
\includegraphics[scale=0.57, trim=1.5cm 1cm 0cm 0cm]{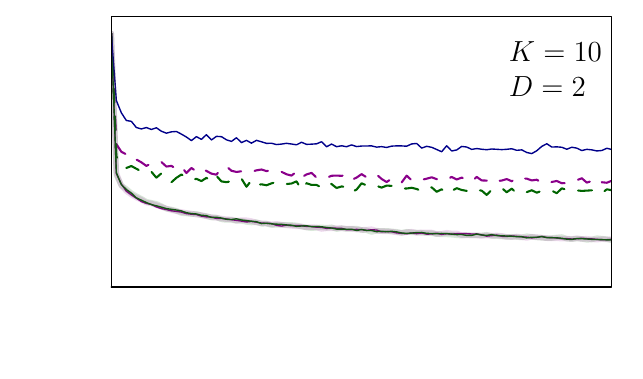}
\includegraphics[scale=0.57, trim=1.5cm 1cm 0.5cm 0cm]{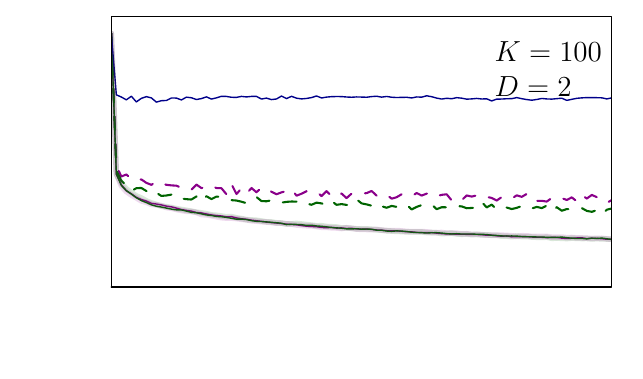}\\
\includegraphics[scale=0.57, trim=0.5cm 1cm 0cm 0cm]{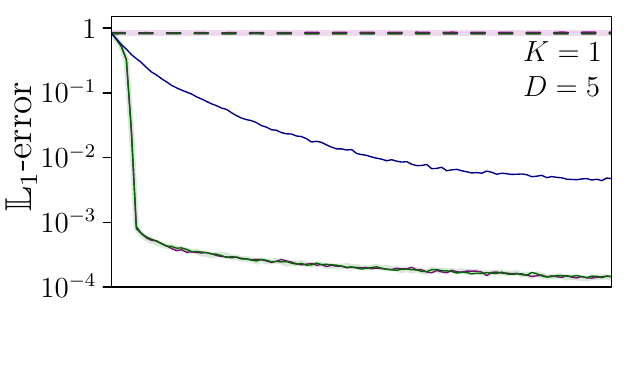}
\includegraphics[scale=0.57, trim=1.5cm 1cm 0cm 0cm]{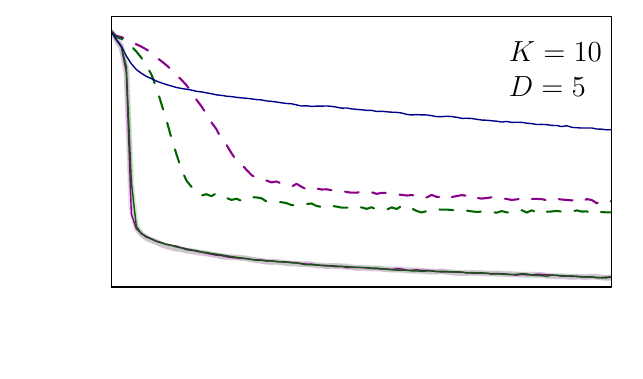}
\includegraphics[scale=0.57, trim=1.5cm 1cm 0.5cm 0cm]{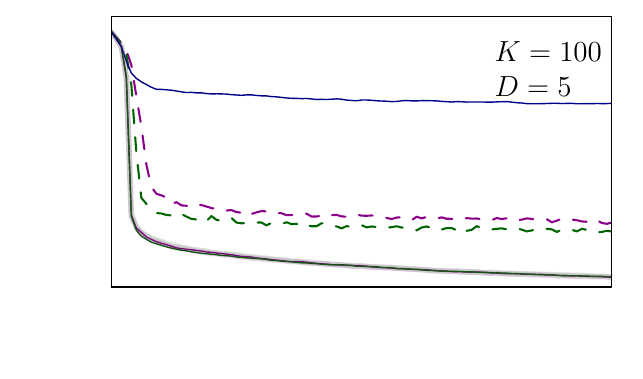}\\
\includegraphics[scale=0.57, trim=0.5cm 0cm 0cm 0cm]{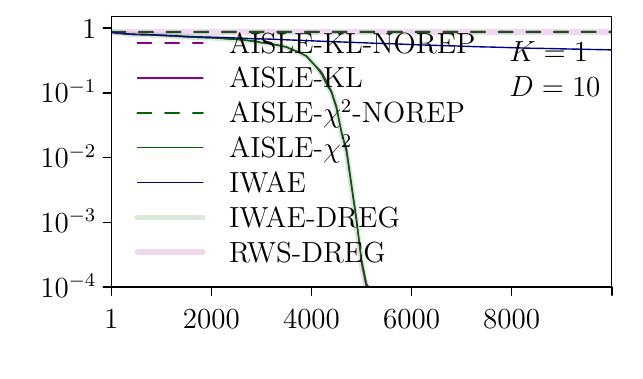}
\includegraphics[scale=0.57, trim=1.5cm 0cm 0cm 0cm]{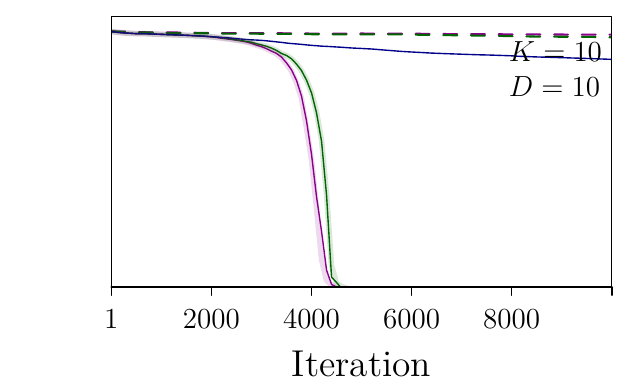}
\includegraphics[scale=0.57, trim=1.5cm 0cm 0.5cm 0cm]{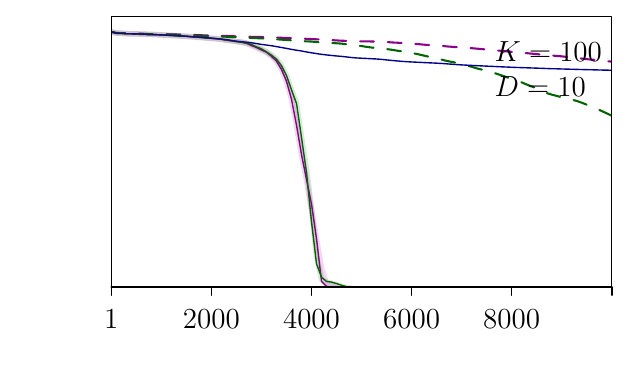}

\caption{Gradient ascent with regularised weights.}
\label{fig:diagonal:sga_regularised_weights}

\end{subfigure}

\begin{subfigure}{0.5\linewidth}
 

\centering
\includegraphics[scale=0.57, trim=0.5cm 1cm 0cm 0cm]{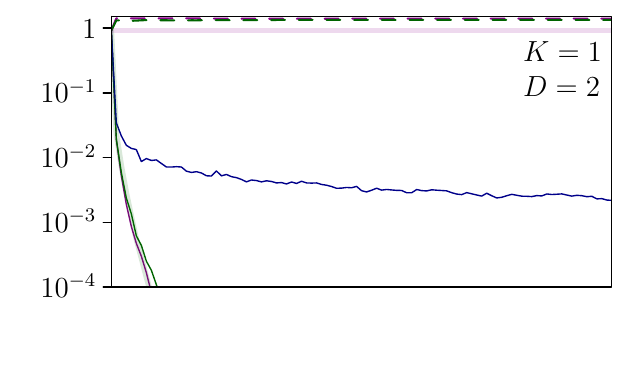}
\includegraphics[scale=0.57, trim=1.5cm 1cm 0cm 0cm]{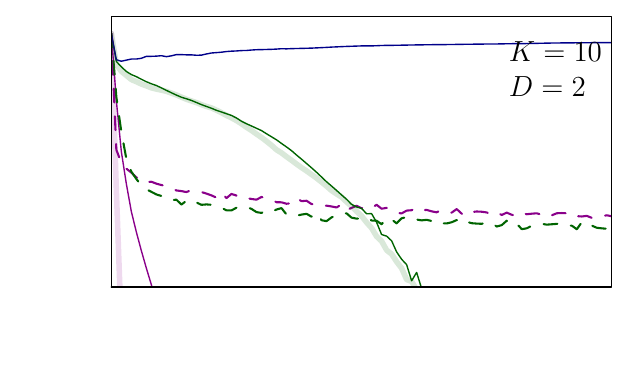}
\includegraphics[scale=0.57, trim=1.5cm 1cm 0cm 0cm]{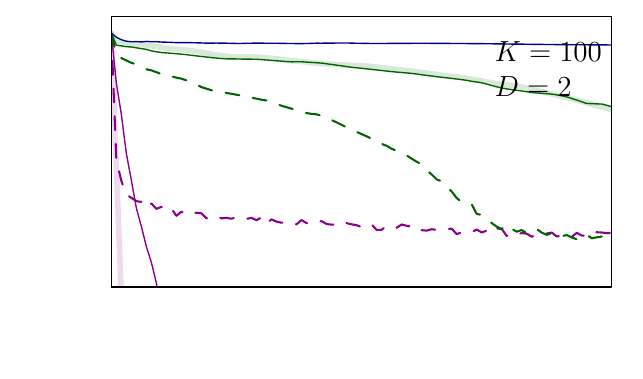}\\
\includegraphics[scale=0.57, trim=0.5cm 1cm 0cm 0cm]{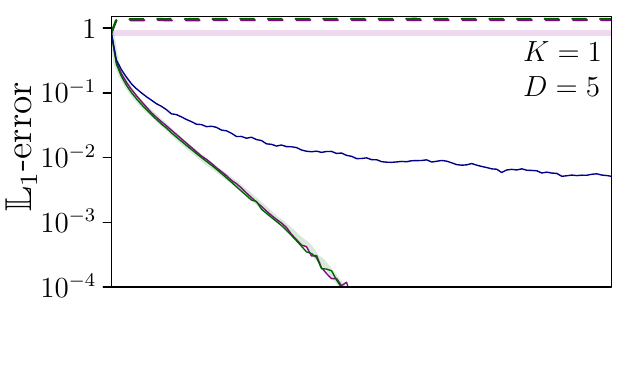}
\includegraphics[scale=0.57, trim=1.5cm 1cm 0cm 0cm]{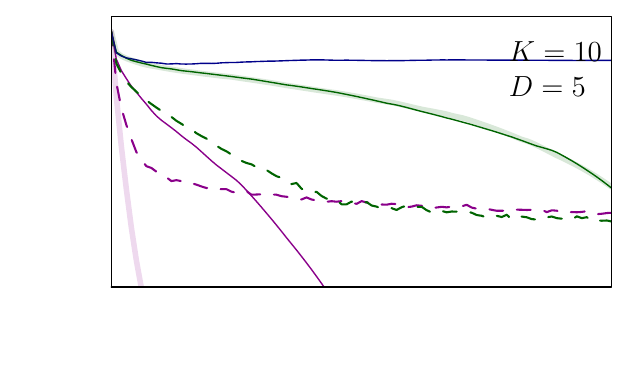}
\includegraphics[scale=0.57, trim=1.5cm 1cm 0cm 0cm]{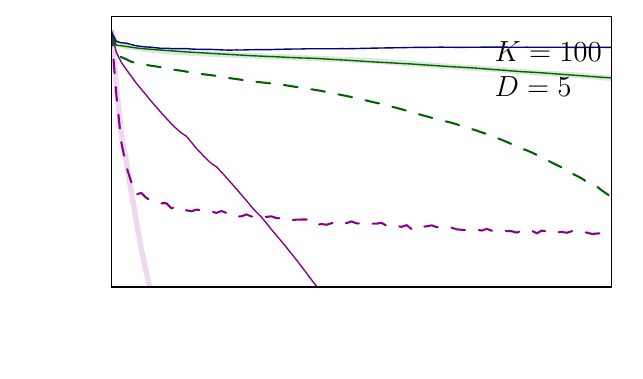}\\
\includegraphics[scale=0.57, trim=0.5cm 0cm 0cm 0cm]{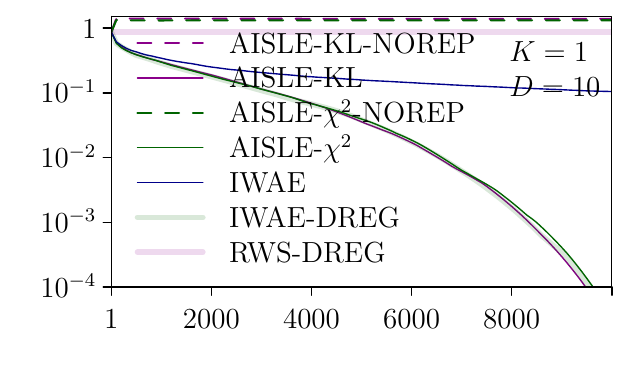}
\includegraphics[scale=0.57, trim=1.5cm 0cm 0cm 0cm]{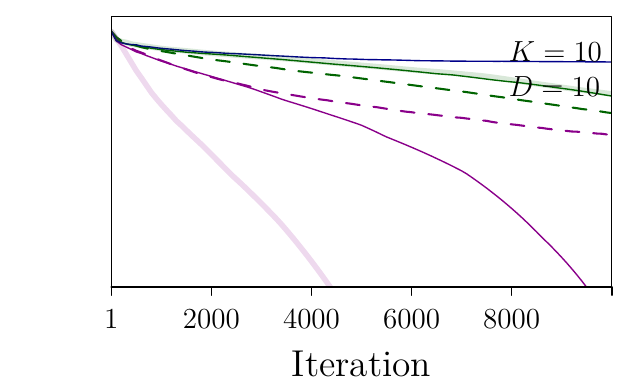}
\includegraphics[scale=0.57, trim=1.5cm 0cm 0cm 0cm]{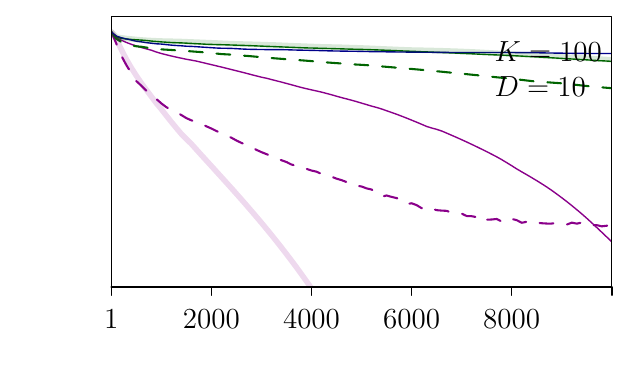} 

\caption{\gls{ADAM} with standard weights.}
\label{fig:diagonal:adam_standard_weights}
\end{subfigure}
\begin{subfigure}{0.5\linewidth}


\centering
\includegraphics[scale=0.57, trim=0.5cm 1cm 0cm 0cm]{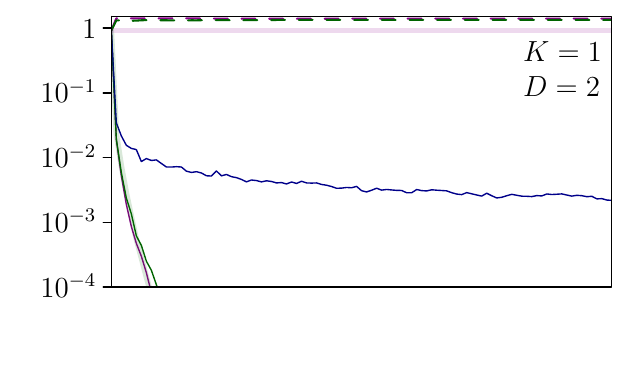}
\includegraphics[scale=0.57, trim=1.5cm 1cm 0cm 0cm]{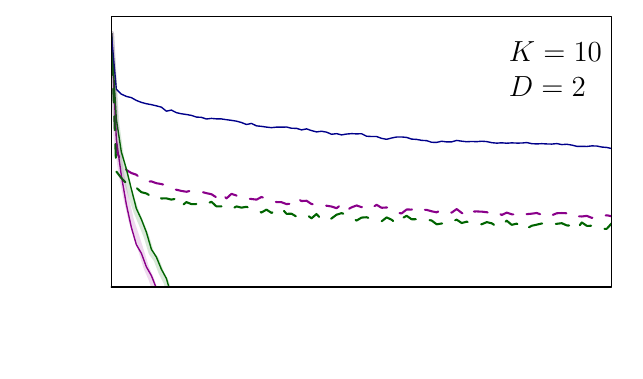}
\includegraphics[scale=0.57, trim=1.5cm 1cm 0.5cm 0cm]{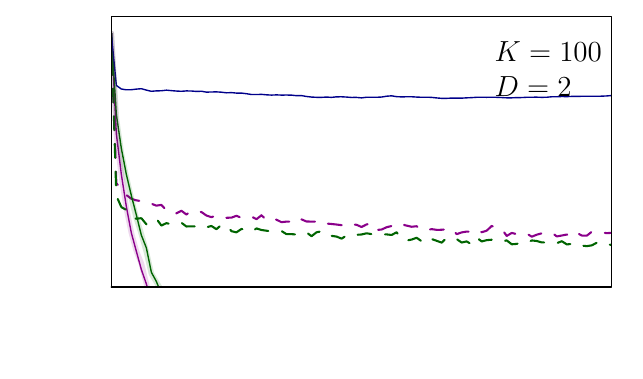}\\
\includegraphics[scale=0.57, trim=0.5cm 1cm 0cm 0cm]{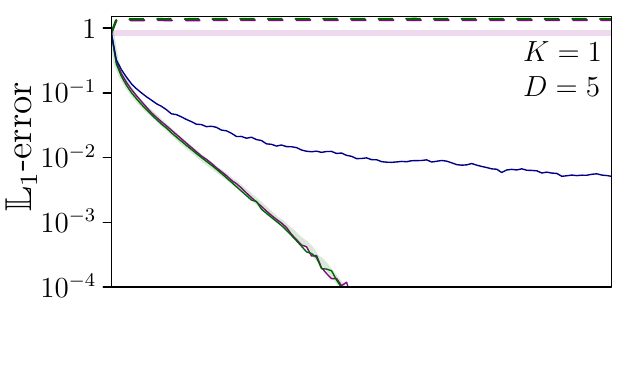}
\includegraphics[scale=0.57, trim=1.5cm 1cm 0cm 0cm]{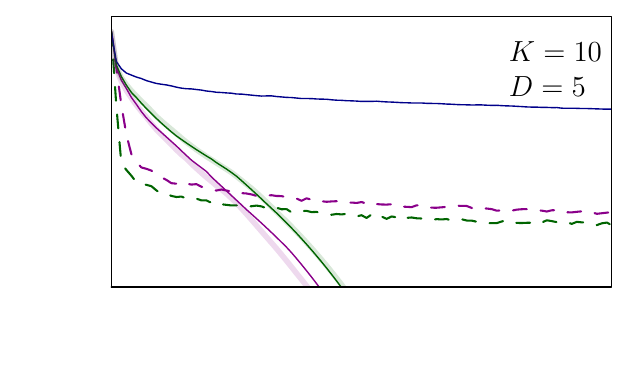} 
\includegraphics[scale=0.57, trim=1.5cm 1cm 0.5cm 0cm]{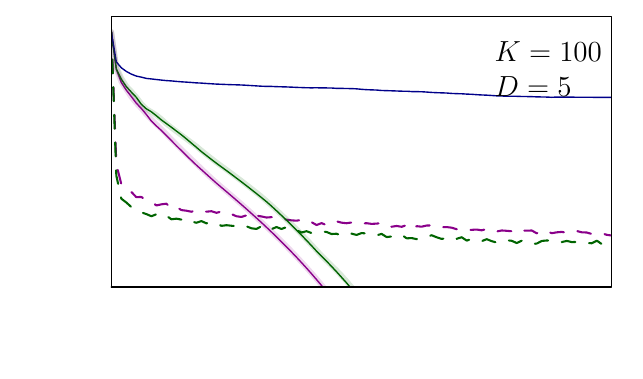}\\
\includegraphics[scale=0.57, trim=0.5cm 0cm 0cm 0cm]{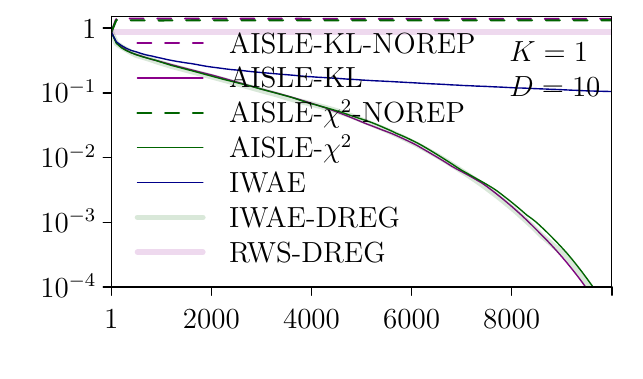}
\includegraphics[scale=0.57, trim=1.5cm 0cm 0cm 0cm]{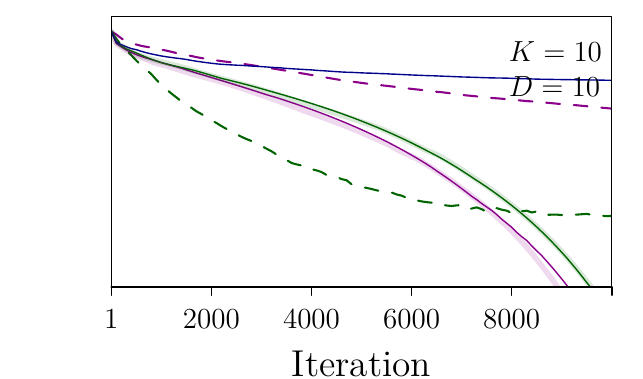}
\includegraphics[scale=0.57, trim=1.5cm 0cm 0.5cm 0cm]{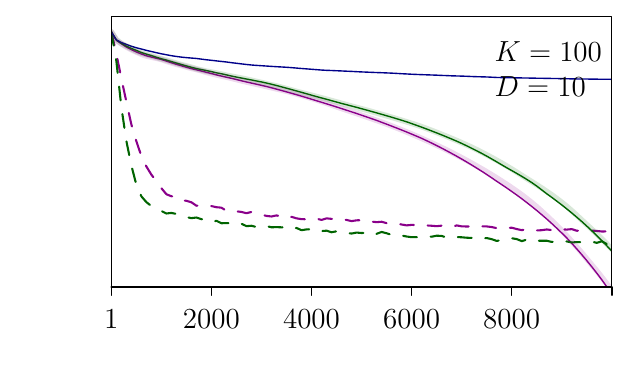} 

\caption{\gls{ADAM} with regularised weights.}
\label{fig:diagonal:adam_regularised_weights}

\end{subfigure}

    \caption{Average $\mathbb{L}_1$-error of the estimates of the parameters $b = b_{1:D}$ governing the mean of the Gaussian variational family. The average is taken over the $D$ components of $b$ and the figure displays the median error at each iteration over $100$ independent runs of each algorithm, each using a different data set consisting of $25$ observations sampled from the model. Note the logarithmic scaling on the second axis. Here, the covariance matrix \emph{$\varSigma = \iMat$ is diagonal.} }
  \label{fig:diagonal}

  \vspace*{\floatsep}
  
\begin{subfigure}{0.5\linewidth}

\centering
\includegraphics[scale=0.57, trim=0.5cm 1cm 0cm 0cm]{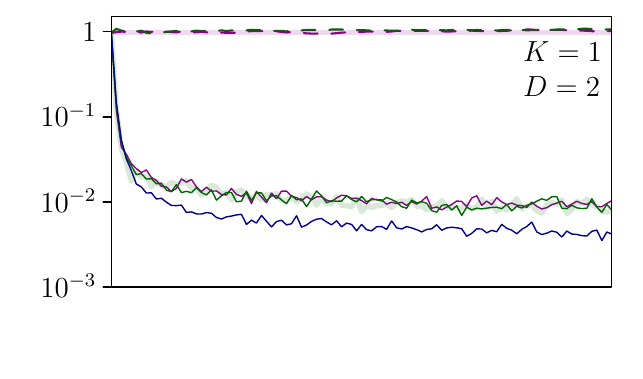}
\includegraphics[scale=0.57, trim=1.5cm 1cm 0cm 0cm]{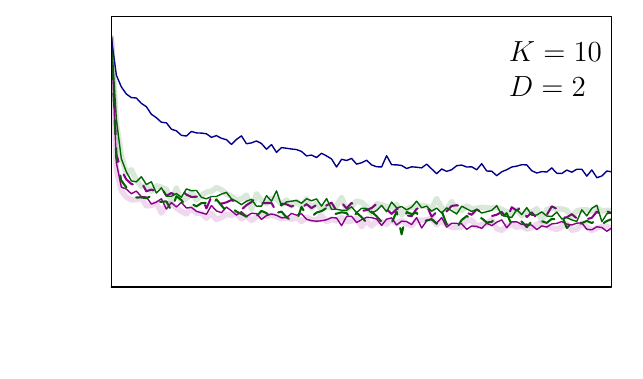}
\includegraphics[scale=0.57, trim=1.5cm 1cm 0cm 0cm]{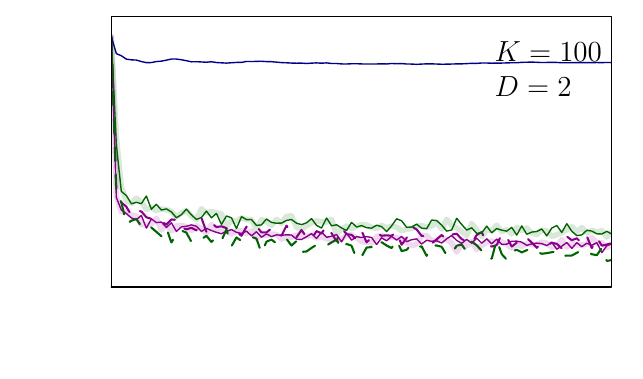}\\
\includegraphics[scale=0.57, trim=0.5cm 1cm 0cm 0cm]{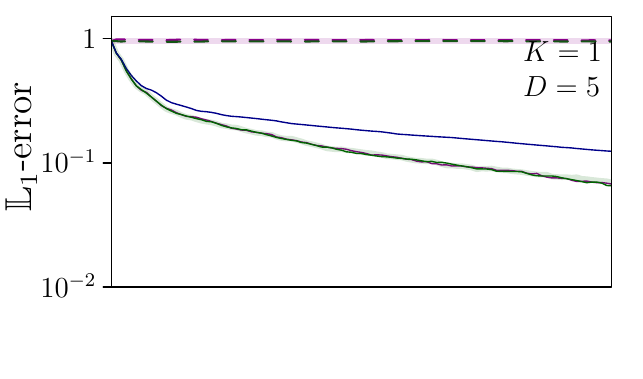}
\includegraphics[scale=0.57, trim=1.5cm 1cm 0cm 0cm]{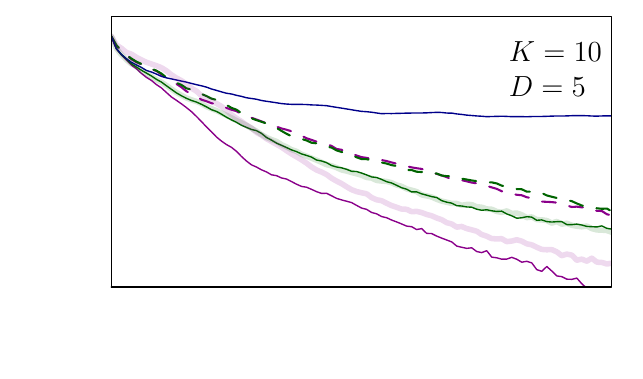}
\includegraphics[scale=0.57, trim=1.5cm 1cm 0cm 0cm]{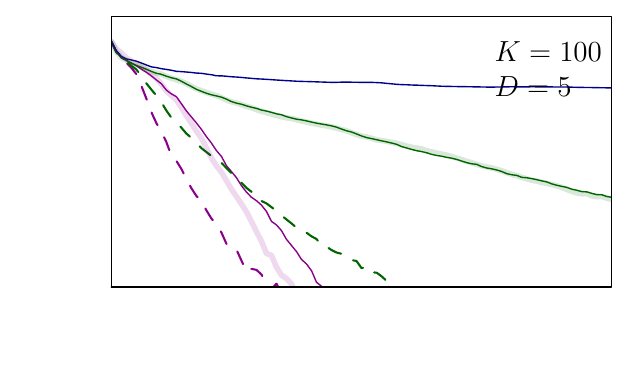}\\
\includegraphics[scale=0.57, trim=0.5cm 0cm 0cm 0cm]{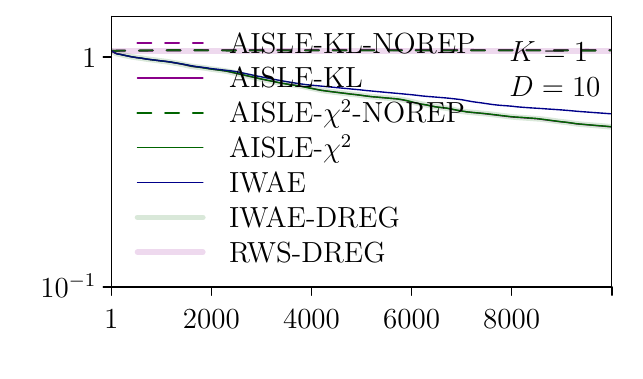}
\includegraphics[scale=0.57, trim=1.5cm 0cm 0cm 0cm]{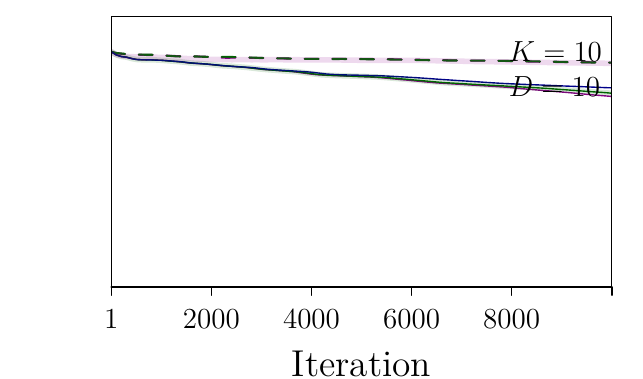}
\includegraphics[scale=0.57, trim=1.5cm 0cm 0cm 0cm]{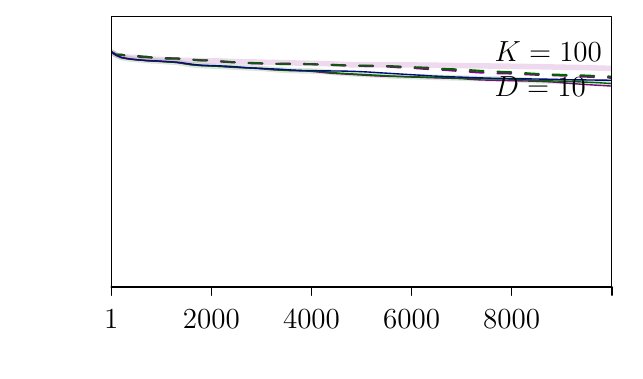} 

\caption{Gradient ascent with standard weights.}
 \label{fig:nondiagonal:sga_standard_weights}

\end{subfigure}
\begin{subfigure}{0.5\linewidth}

 \centering
\includegraphics[scale=0.57, trim=0.5cm 1cm 0cm 0cm]{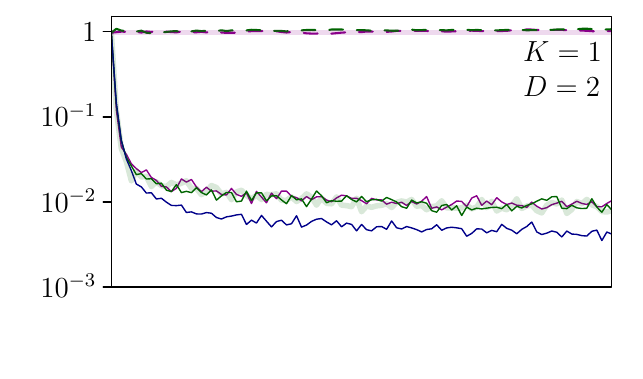}
\includegraphics[scale=0.57, trim=1.5cm 1cm 0cm 0cm]{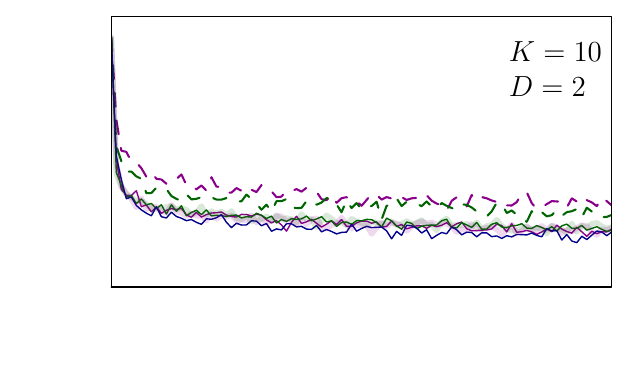}
\includegraphics[scale=0.57, trim=1.5cm 1cm 0.5cm 0cm]{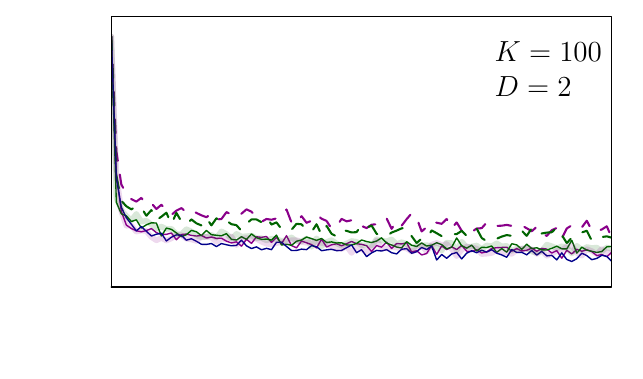}\\
\includegraphics[scale=0.57, trim=0.5cm 1cm 0cm 0cm]{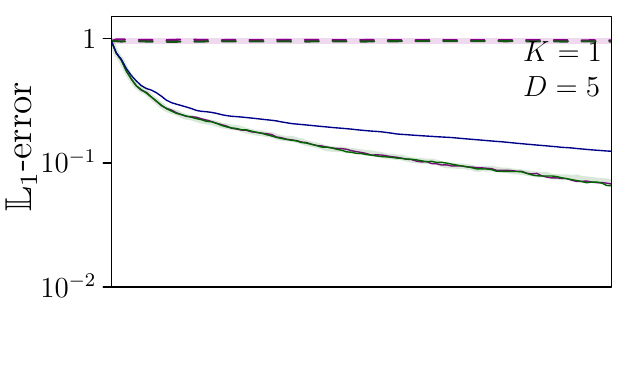}
\includegraphics[scale=0.57, trim=1.5cm 1cm 0cm 0cm]{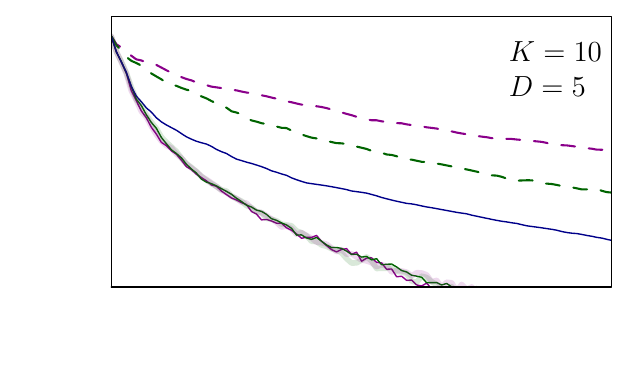}
\includegraphics[scale=0.57, trim=1.5cm 1cm 0.5cm 0cm]{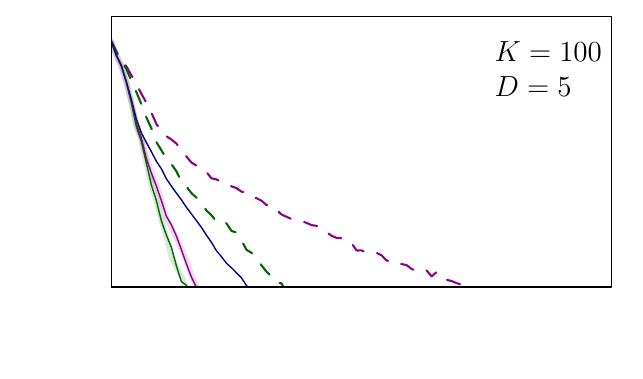}\\
\includegraphics[scale=0.57, trim=0.5cm 0cm 0cm 0cm]{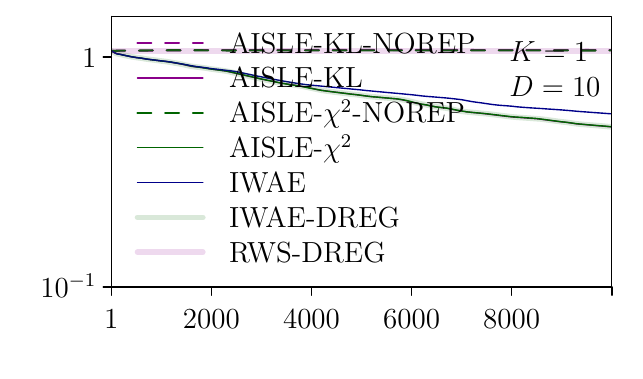}
\includegraphics[scale=0.57, trim=1.5cm 0cm 0cm 0cm]{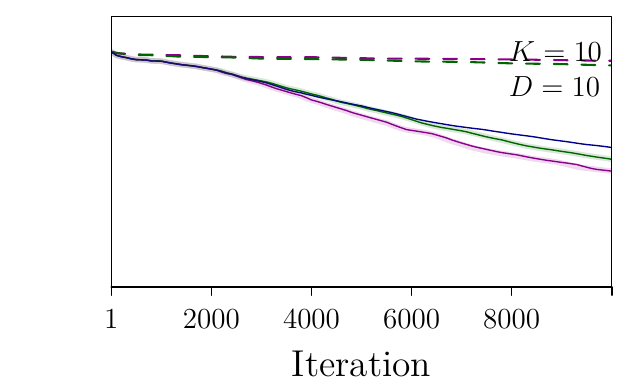}
\includegraphics[scale=0.57, trim=1.5cm 0cm 0.5cm 0cm]{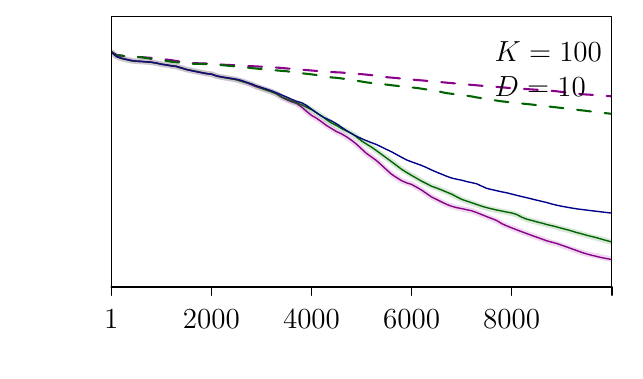} 

\caption{Gradient ascent with regularised weights.}
 \label{fig:nondiagonal:sga_regularised_weights}

\end{subfigure}

\begin{subfigure}{0.5\linewidth}
 

\centering
\includegraphics[scale=0.57, trim=0.5cm 1cm 0cm 0cm]{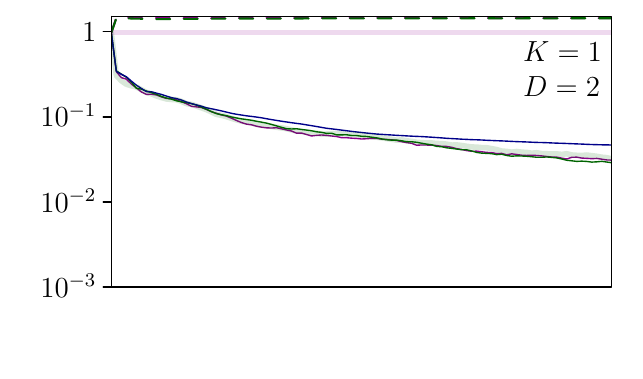}
\includegraphics[scale=0.57, trim=1.5cm 1cm 0cm 0cm]{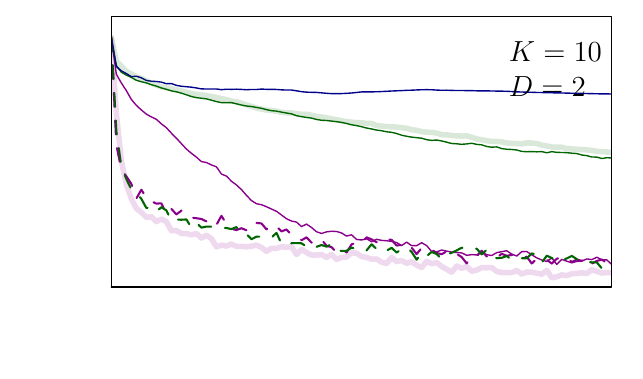}
\includegraphics[scale=0.57, trim=1.5cm 1cm 0cm 0cm]{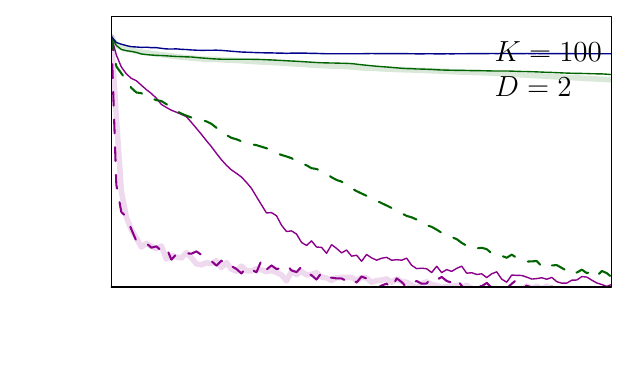}\\
\includegraphics[scale=0.57, trim=0.5cm 1cm 0cm 0cm]{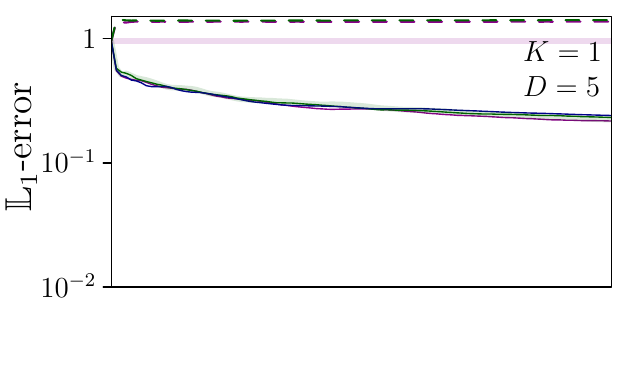}
\includegraphics[scale=0.57, trim=1.5cm 1cm 0cm 0cm]{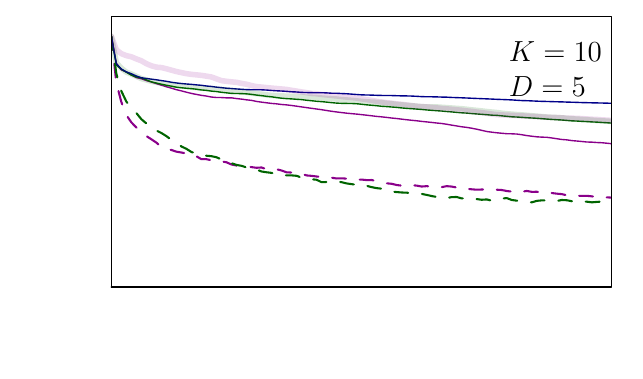}
\includegraphics[scale=0.57, trim=1.5cm 1cm 0cm 0cm]{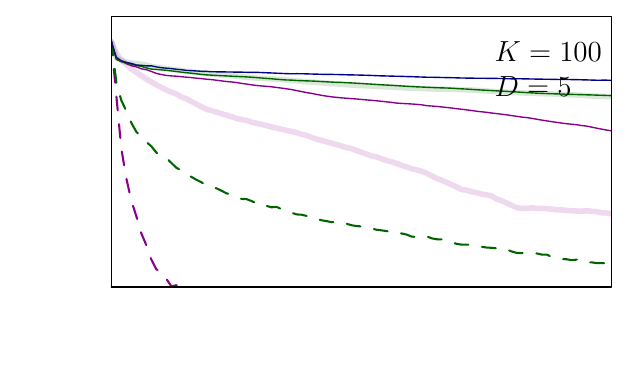}\\
\includegraphics[scale=0.57, trim=0.5cm 0cm 0cm 0cm]{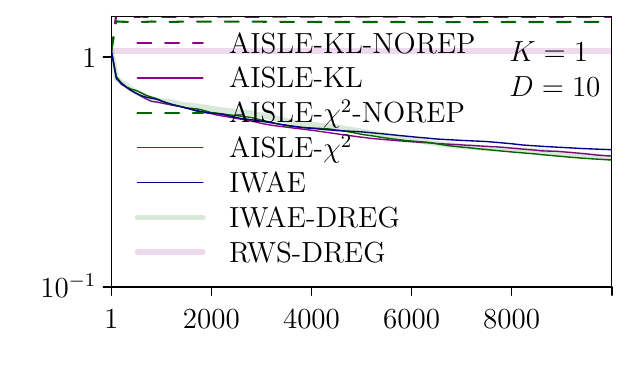}
\includegraphics[scale=0.57, trim=1.5cm 0cm 0cm 0cm]{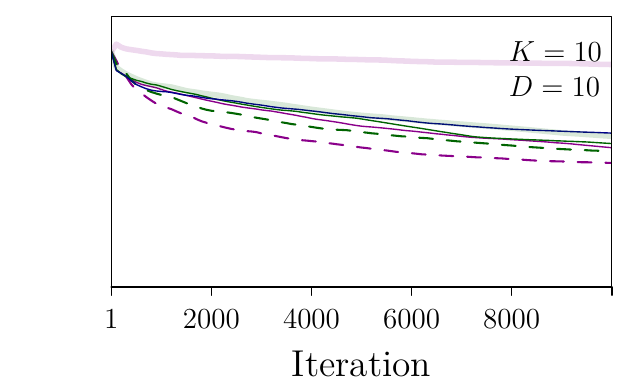}
\includegraphics[scale=0.57, trim=1.5cm 0cm 0cm 0cm]{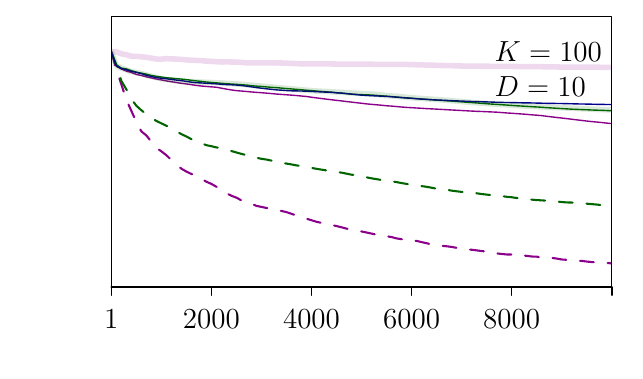}

\caption{\gls{ADAM} with standard weights.}
\label{fig:nondiagonal:adam_standard_weights}
\end{subfigure}
\begin{subfigure}{0.5\linewidth}


\centering
\includegraphics[scale=0.57, trim=0.5cm 1cm 0cm 0cm]{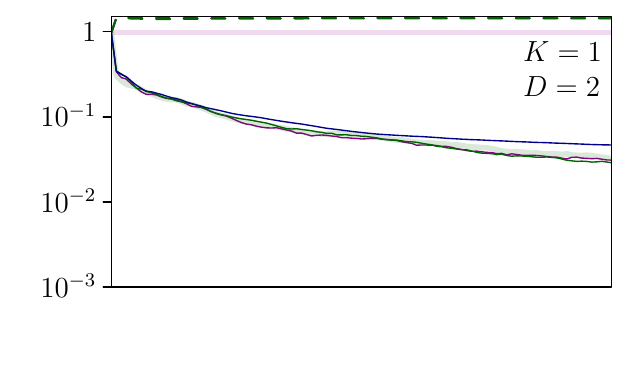}
\includegraphics[scale=0.57, trim=1.5cm 1cm 0cm 0cm]{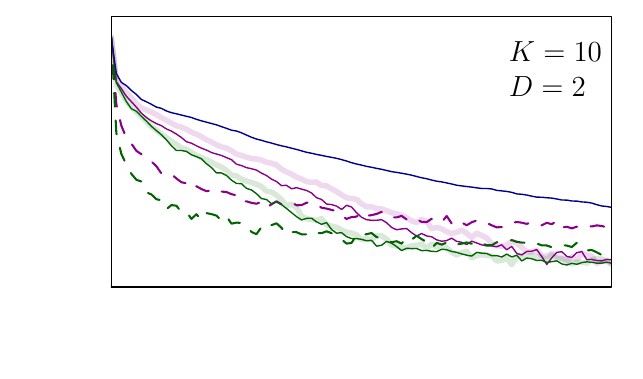}
\includegraphics[scale=0.57, trim=1.5cm 1cm 0.5cm 0cm]{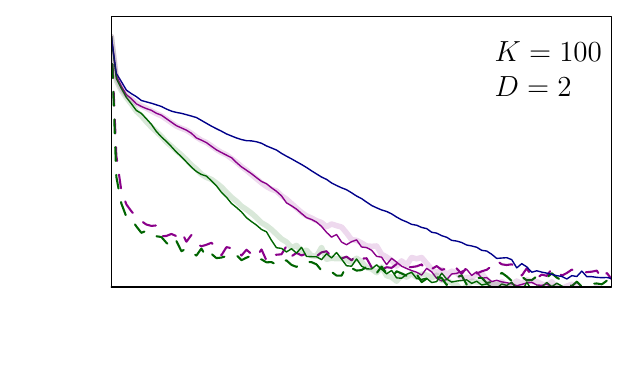}\\
\includegraphics[scale=0.57, trim=0.5cm 1cm 0cm 0cm]{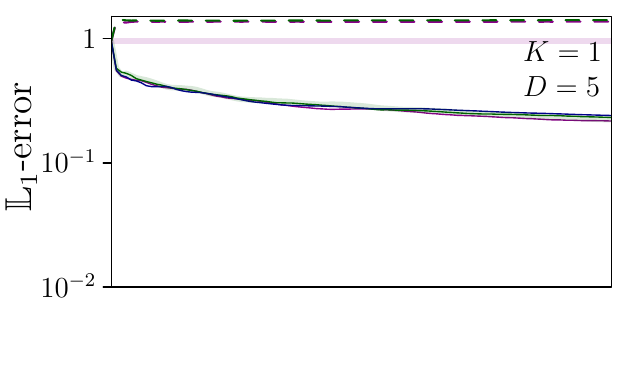}
\includegraphics[scale=0.57, trim=1.5cm 1cm 0cm 0cm]{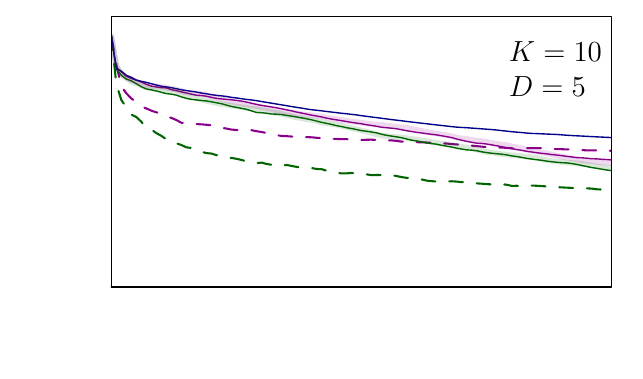} 
\includegraphics[scale=0.57, trim=1.5cm 1cm 0.5cm 0cm]{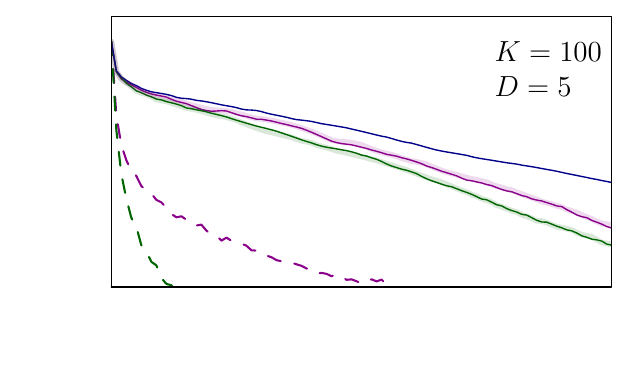}\\
\includegraphics[scale=0.57, trim=0.5cm 0cm 0cm 0cm]{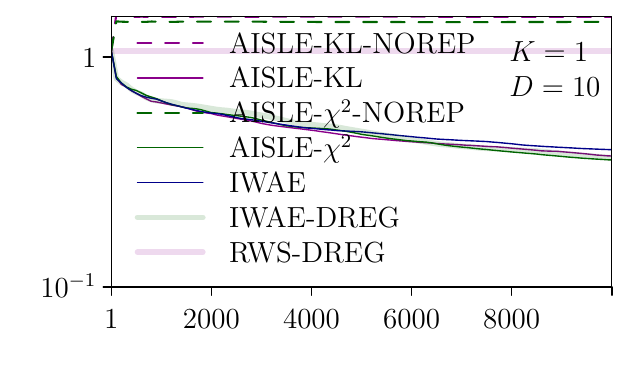}
\includegraphics[scale=0.57, trim=1.5cm 0cm 0cm 0cm]{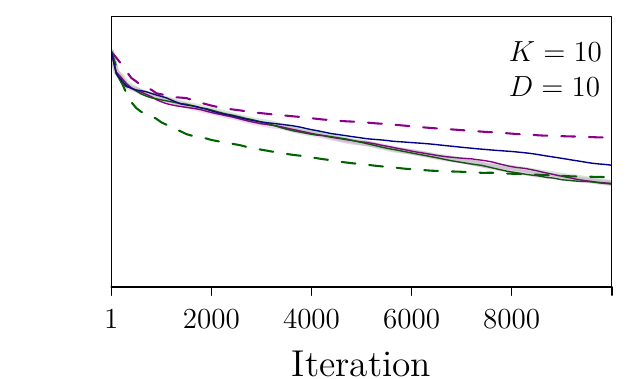}
\includegraphics[scale=0.57, trim=1.5cm 0cm 0.5cm 0cm]{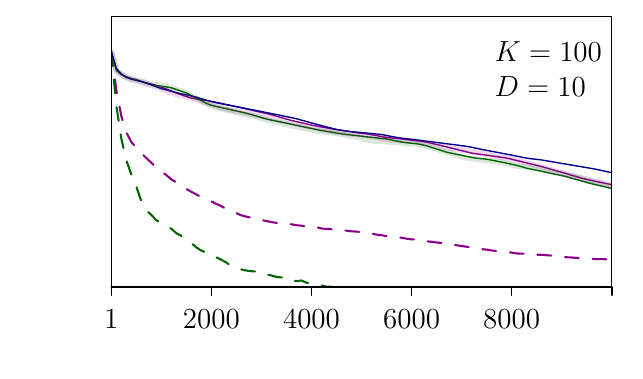}

\caption{\gls{ADAM} with regularised weights.}
\label{fig:nondiagonal:adam_regularised_weights}
\end{subfigure}

    \caption{The same setting as in Figure~\ref{fig:diagonal} except that here, the covariance matrix \emph{$\smash{\varSigma = (0.95^{\lvert d-e\rvert +1})_{(d,e) \in \{1,\dotsc,D\}^2}}$ is not a diagonal matrix.} Again, note the logarithmic scaling on the second axis.}
  \label{fig:nondiagonal}
\end{figure}
  
\restoregeometry

\paragraph{Summary of results.}
Below, we outline what we believe to be the main takeaways from these simulation results for this particular model. However, further theoretical analysis is required to determine whether these hold in more general scenarios.
\begin{enumerate}

 \item The \gls{KL}-divergence based \gls{AISLE} algorithms typically performed somewhat better than their $\chi^2$-divergence based \gls{AISLE} counterparts, i.e.\ \gls{AISLEKLNOREP} outperformed \gls{AISLECHISQNOREP} while \gls{AISLEKL} outperformed \gls{AISLECHISQ}. We conjecture that this is due to the fact that the $\chi^2$-divergence based variants square the (self-normalised) importance weights which increases the variance of the $\phi$-gradients.

 \item The performance of the $\phi$-gradients \gls{AISLEKLNOREP} and \gls{AISLECHISQNOREP} (which do not use\slash{}need any reparametrisation) typically benefited strongly from moderate (relative to the dimension of the latent variables) increases in the number of particles. When \gls{ADAM} was used (and for larger $K$), these gradients outperformed the `score-function free' $\phi$-gradients \gls{AISLEKL}\slash{}\gls{IWAESTL}, \gls{AISLECHISQ}\slash{}\gls{IWAEDREG} in the scenario shown in Figure~\ref{fig:nondiagonal:adam_standard_weights}. We conjecture that this is due to the fact that the variational family does not include the target distribution in this scenario, i.e.\ $q_\phi \neq \target_\theta$ for any $\phi$, and as a result, the main advantage of the `score-function free' gradients -- i.e.\ the fact that they can potentially achieve zero variance -- cannot be realised. 
 
 \item As expected, the performance of the standard \gls{IWAE} $\phi$-gradient consistently became worse with increasing $K$ (see Figures~\ref{fig:diagonal:sga_standard_weights}, \ref{fig:diagonal:adam_standard_weights}, \ref{fig:nondiagonal:sga_standard_weights} and \ref{fig:nondiagonal:adam_standard_weights}). This can be attributed to the fact that the signal-to-noise ratio of this gradient vanishes as $\bo(K^{-1/2})$ as this gradient constitutes a self-normalised importance-sampling approximation of an integral which is equal to zero (see \citet{rainforth2018tighter} and also Subsection~\ref{subsec:iwae}).
 
 \item More surprisingly, the `score-function free' $\phi$-gradients \gls{AISLEKL}\slash{}\gls{IWAESTL}, \gls{AISLECHISQ}\slash{}\gls{IWAEDREG} (as well as the \gls{AISLECHISQNOREP} gradient in Figure~\ref{fig:diagonal:adam_standard_weights}) did not appear to improve with increasing $K$. Indeed, their performance sometimes became worse. We note that this \emph{cannot} be explained by the signal-to-noise ratio decay (which \citet{rainforth2018tighter} highlighted for the standard \gls{IWAE} $\phi$-gradient) because the `score-function free' $\phi$-gradients do not constitute self-normalised importance-sampling approximations of integrals which are equal to zero. Instead, we conjecture that as discussed in Remark~\ref{rem:minimising_exclusive_divergence_preferable} in this model, the $\smash{\bo(K^{-1})}$ self-normalisation bias of these gradients happens to be beneficial and outweighs the $\smash{\bo(K^{-1/2})}$ standard-deviation decrease obtained from increasing $K$. To counteract this issue, we also regularised the weights in each of these estimators as discussed in Subsection~\ref{subsec:bias-variance_trade-offs}, i.e.\ we replaced $\smash{w_{\psi,x}(z^k) / \sum_{l=1}^K w_{\psi,x}(z^l)}$ by $\smash{w_{\psi,x}^{\alpha^\star}(z^k) / \sum_{l=1}^K w_{\psi,x}^{\alpha^\star}(z^l)}$, where $\smash{\alpha^\star}$ was determined as explained in Subsection~\ref{subsec:bias-variance_trade-offs} with $\eta = 0.8$. Figures~\ref{fig:diagonal:sga_regularised_weights}, \ref{fig:diagonal:adam_regularised_weights}. \ref{fig:nondiagonal:sga_regularised_weights} and \ref{fig:nondiagonal:adam_regularised_weights} show that this regularisation strategy appears to be especially beneficial to the $\chi^2$-divergence based \gls{AISLE} gradients.


 
 \item The `doubly-reparametrised' \gls{RWS}-gradient \gls{RWSDREG} from \citet{tucker2019reparametrised} and given in \eqref{eq:rws:gradient_approximation:variance_reduced:unprincipled} performed well for a moderate to large number of particles $K > 1$ in settings in which the oscillation of the importance-weight function, $w_\psi$, is relatively small (or at least if it becomes small as $\phi$ is optimised).
 However, this requires that $q_{\phi,x}$ can be made very close to $\target_{\theta,x}$ for an appropriate choice of $\phi$ which is typically only possible in low-dimensional settings and if the variational family is sufficiently expressive, i.e.\ in the scenario from Figure~\ref{fig:diagonal}. Otherwise, e.g.\ in dimension $D = 10$ in the scenario from Figures~\ref{fig:nondiagonal}, the performance of \gls{RWSDREG} was worse than that of any \gls{AISLE} variants and also worse than the standard \gls{IWAE} reparametrisation-trick gradient. We conjecture that this is because the variance of the weights is so large that typically one of the self-normalised weights $\smash{w_{\psi,x}(z^k) /  \sum_{l=1}^K w_{\psi,x}(z^l)}$ is numerically equal to $1$ while all the others are numerically equal to $0$. Note that whenever this happens, the \gls{RWSDREG} gradient reduces to a vector of $0$s. Again, Figures~\ref{fig:nondiagonal:sga_regularised_weights} and \ref{fig:nondiagonal:adam_regularised_weights} show that the regularisation strategy from Subsection~\ref{subsec:bias-variance_trade-offs} alleviates this problem (though Figures~\ref{fig:diagonal:adam_standard_weights} and \ref{fig:diagonal:adam_regularised_weights} make it clear that \gls{RWSDREG} does not necessarily benefit from this kind of regularisation under all circumstances).
\end{enumerate}
\end{document}